\patchcmd{\hyper@makecurrent}{%
    \ifx\Hy@param\Hy@chapterstring
        \let\Hy@param\Hy@chapapp
    \fi
}{%
    \iftoggle{inappendix}{%true-branch
        % list the names of all sectioning counters here
        \@checkappendixparam{chapter}%
        \@checkappendixparam{section}%
        \@checkappendixparam{subsection}%
        \@checkappendixparam{subsubsection}%
        \@checkappendixparam{paragraph}%
        \@checkappendixparam{subparagraph}%
    }{}%
}{}{\errmessage{failed to patch}}
\newcommand*{\@checkappendixparam}[1]{%
    \def\@checkappendixparamtmp{#1}%
    \ifx\Hy@param\@checkappendixparamtmp
        \let\Hy@param\Hy@appendixstring
    \fi
}
\apptocmd{\appendix}{\toggletrue{inappendix}}{}{\errmessage{failed to patch}}
\def\eqref#1{equation~\ref{#1}}
\def\1{\bm{1}}
\def\va{{\bm{a}}}
\def\vg{{\bm{g}}}
\def\vh{{\bm{h}}}
\def\vo{{\bm{o}}}
\def\vz{{\bm{z}}}
\DeclareMathAlphabet{\mathsfit}{\encodingdefault}{\sfdefault}{m}{sl}
\SetMathAlphabet{\mathsfit}{bold}{\encodingdefault}{\sfdefault}{bx}{n}
\def\gA{{\mathcal{A}}}
\def\gN{{\mathcal{N}}}
\def\gR{{\mathcal{R}}}
\def\gS{{\mathcal{S}}}
\def\gZ{{\mathcal{Z}}}
\def\sR{{\mathbb{R}}}
\DeclareMathOperator*{\aggr}{\textsc{Aggr}}
\DeclareMathOperator*{\upd}{\textsc{Upd}}
\DeclareMathOperator*{\meanset}{\textsc{Mean}}
\newcommand{\hiergraph}{\mathcal{G}^{\bm{*}}}
\newacronym{method}{HiMPo}{\textit{\underline{Hi}erarchical \underline{M}essage-Passing \underline{Po}licy}}
\newacronym{rl}{RL}{Reinforcement Learning}
\newacronym{marl}{MARL}{Multi-Agent Reinforcement Learning}
\newacronym{hrl}{HRL}{Hierarchical Reinforcement Learning}
\newacronym{frl}{FRL}{Feudal Reinforcement Learning}
\newacronym{fgrl}{FGRL}{Feudal Graph Reinforcement Learning}
\newacronym{posg}{POSG}{Partially-Observable Stochastic Game}
\newacronym{pomdp}{POMDP}{Partially-Observable Markov Decision Process}
\newacronym{ctde}{CTDE}{Centralized Training Decentralized Execution}
\newacronym{comm-madrl}{Comm-MADRL}{Communication Multi-Agent Deep RL}
\newacronym{himppo}{HiMPPO}{Hierarchical Message-Passing PPO}
\newacronym{gppo}{GPPO}{Graph Neural Network PPO}
\newacronym{ippo}{IPPO}{Independent PPO}
\newacronym{mappo}{MAPPO}{Multi-Agent PPO}
\newacronym{hgppo}{H-GPPO}{hierarchical GPPO}
\newacronym{hguppo}{H-GUPPO}{hierarchical Graph U-Nets PPO}
\newacronym{himppo-fl}{HiMPPO-FL}{Full Local HiMPPO}
\newacronym{himppo-nl}{HiMPPO-NL}{No Local HiMPPO}
\newacronym{himppo-er}{HiMPPO-ER}{Environment Reward HiMPPO}
\newacronym{himppo-2l}{HiMPPO-$2$L}{$2$-Levels HiMPPO}
\newacronym{himppo-sg}{HiMPPO-SG}{Static Graph HiMPPO}
\newacronym{gnn}{GNN}{Graph Neural Network}
\definecolor{hlcolor}{HTML}{80BCF2}
\definecolor{hlcolorlight}{HTML}{BDDCF9}
\definecolor{hlred}{HTML}{F69B93}
\definecolor{hlyellow}{HTML}{BFEA9F}
\DeclareRobustCommand{\hlcyan}[1]{{\sethlcolor{hlcolor}\hl{#1}}}
\DeclareRobustCommand{\hlcyanlight}[1]{{\sethlcolor{hlcolorlight}\hl{#1}}}
\DeclareRobustCommand{\hlred}[1]{{\sethlcolor{hlred}\hl{#1}}}
\DeclareRobustCommand{\hlyellow}[1]{{\sethlcolor{hlyellow}\hl{#1}}}
\theoremstyle{plain}
\newtheorem{theorem}{Theorem}[section]
\newtheorem{lemma}[theorem]{Lemma}
\newtheorem{corollary}[theorem]{Corollary}
\title{Hierarchical Message-Passing Policies for Multi-Agent Reinforcement Learning}
\author{
    Tommaso Marzi \textsuperscript{\rm 1},
    Cesare Alippi \textsuperscript{\rm 1,2},
    Andrea Cini \textsuperscript{\rm 3}\\[.3em]
    \textsuperscript{\rm 1} IDSIA USI-SUPSI, Università della Svizzera italiana, Lugano, Switzerland\\
    \textsuperscript{\rm 2} Politecnico di Milano, Milan, Italy\\
   \textsuperscript{\rm 3} EPFL, Lausanne, Switzerland
}
\begin{document}

\maketitle

\begin{abstract}
Decentralized \acrfull{marl} methods allow for learning scalable multi-agent policies, but suffer from partial observability and induced non-stationarity. 
These challenges can be addressed by introducing mechanisms that facilitate coordination and high-level planning.
Specifically, coordination and temporal abstraction can be achieved through communication~(e.g., message passing) and \acrfull{hrl} approaches to decision-making. 
However, optimization issues limit the applicability of hierarchical policies to multi-agent systems. As such, the combination of these approaches has not been fully explored.
To fill this void, we propose a novel and effective methodology for learning multi-agent hierarchies of message-passing policies. We adopt the feudal \acrshort{hrl} framework and rely on a hierarchical graph structure for planning and coordination among agents. 
Agents at lower levels in the hierarchy receive goals from the upper levels and exchange messages with neighboring agents at the same level. 
To learn hierarchical multi-agent policies, we design a novel reward-assignment method based on training the lower-level policies to maximize the advantage function associated with the upper levels. Results on relevant benchmarks show that our method performs favorably compared to the state of the art.
\end{abstract}

\section{Introduction}~\label{sec:introduction}
Designing information processing methods to support coordination and planning is a core challenge in \gls{marl}~\cite{littman1994markov, huh2023multi}. 
Purely centralized approaches~\cite{Gupta2017CooperativeMC} reduce the multi-agent problem to a single-agent formulation and thus cannot scale~\cite{yuan2023survey}. Conversely, decentralized methods~\cite{amato2024introduction} rely solely on local observations: while being scalable, operating at the level of individual agents can make the environment appear non-stationary as other agents update their policies~\cite{hernandez2017survey,papoudakis2019dealing}. The induced non-stationarity, combined with partial observability, can hinder coordination and high-level planning~\cite{jin2025comprehensive}. 
Similarly, \gls{ctde} approaches~\cite{foerster2016learning,yuan2023survey,amato2024introduction} train agents on all the available~(global) information, but are limited to relying exclusively on local observations at execution time~\cite{zhou2023centralized,lyu2023centralized}. 
To overcome these challenges, several methods allow for information sharing among agents. In this regard, the \gls{comm-madrl}~\cite{zhu2024survey} framework improves coordination and mitigates partial observability by leveraging agent communication~\cite{singh2018individualized, das2019tarmac,hu2024learning}, possibly during both training and execution~\cite{niu2021multi,nayak2023scalable}. 
\Gls{comm-madrl} approaches often employ graph-based representations~\cite{battaglia2018relational}, which allow for modeling interacting agents as \emph{connected} nodes within a graph structure~\cite{duan2024inferring,ijcai2024p434}. In graph-based \gls{marl}, local observations are usually processed by relying on weight-sharing message-passing~\glspl{gnn}~\cite{gilmer2017neural, bacciu2020gentle, bronstein2021geometric}. 
However, communication on \textit{flat} graphs~(i.e., graphs modeling only pairwise interactions) can introduce bottlenecks in information propagation~\cite{topping2022understanding,arnaiz2025oversmoothing}, hindering coordination and planning~\cite{kurin2021my,marzi2024feudal}. 
Conversely, \gls{hrl}~\cite{sutton1999between,makar2001hierarchical,barto2003recent,ghavamzadeh2006hierarchical} methods organize the learning system into a hierarchical decision-making structure. By doing so, \gls{hrl} introduces learning biases that facilitate spatiotemporal abstraction and long-term planning~\cite{kulkarni2016hierarchical,yang2020hierarchical}. 
In particular, \gls{frl}~\cite{dayan1992feudal} relies on hierarchies of policies where the upper levels send commands and rewards to lower levels.
Combining communication-based approaches with an \gls{hrl} framework such as \gls{frl} has the potential to bring together the best of both worlds, but it is not straightforward.
Indeed, \gls{frl} requires defining ad-hoc rewards for lower levels. This adds overhead for the designer and hinders end‑to‑end learning of the hierarchy of policies~\cite{vezhnevets2017feudal,marzi2024feudal}. 
Moreover, when a hierarchy is added on top of the base agents, the message-passing mechanisms must be adapted to account for the additional interacting entities. 

In this paper, we propose a novel \gls{hrl} method for communication and coordination among different policies in \gls{marl} problems. We do so by adopting the \gls{frl} framework and introducing a procedure to avoid the associated reward-shaping and optimization issues. Our approach allows for learning a multi-level hierarchy of feudal message-passing policies, thereby achieving high-level planning and coordination.
The hierarchical graph defining the feudal structure is determined according to the current state, which can change over time.
To train the resulting hierarchy of policies, we design a novel reward-assignment scheme in which lower-level policies are trained to maximize the advantage function associated with the upper levels. 
The resulting propagation mechanism is decentralized as each agent optimizes different reward signals that depend on the hierarchical structure at each time step. 
In particular, we theoretically show that the resulting objective remains aligned with maximizing agents' local reward. 
This also makes our method suitable for scenarios that are not fully cooperative~(provided a coherent definition of the hierarchical structure). 
Our hierarchical \gls{marl} approach enjoys the same benefits of decentralized communication methods~\cite{zhu2024survey,aloor2024towards}, while relying on the feudal paradigm to mitigate non-stationarity and partial observability. 
Policy optimization can be carried out by applying any policy gradient algorithm to local trajectories at different levels. 
To summarize, our \textit{contributions} are the following:
\begin{enumerate}
\addtolength\itemsep{-.7mm}
    \item We introduce a novel method based on \gls{frl} for learning multi-level hierarchies of message-passing policies in multi-agent systems.
    \item We propose a flexible and adaptive reward-assignment scheme that leverages hierarchical graph structures for learning multi-level feudal policies. 
    \item We provide theoretical guarantees that the proposed learning scheme generates level-specific reward signals aligned with the global task.
\end{enumerate}
Empirical results show that our method, named \gls{method}, achieves strong performance in challenging \gls{marl} benchmarks that require coordination among agents. \Gls{method} does not necessarily require a pre-defined graph structure to operate on, as it can simply rely on the hierarchy for communication. \Gls{method} is a step toward the design of effective \gls{marl} methods for coordination and high-level planning.

\section{Preliminaries}~\label{sec:preliminaries}

\paragraph{\acrlong{posg}} Multi-agent systems can be formalized as \glspl{posg}~\cite{hansen2004dynamic}, i.e., a tuple $\langle \gN, \gS, \{\gZ_i\}_{i\in\gN}, \{\gA_i\}_{i\in\gN}, \mathcal{P},\{\gR_i\}_{i\in\gN}, \gamma \rangle$ where $\gN$ is the set of agents, $\gS$ is a state space, $\gZ_i$ and $\gA_i$ are the observation and action spaces of the $i$-th agent, $\mathcal{P}: \gS \times \gA_1\times\dots\times\gA_\gN \rightarrow \gS$ is a Markovian transition function depending on the joint set of actions, $\gR_i: \gS \times \gA_1\times\dots\times\gA_\gN \rightarrow \sR$ is the reward function for the $i$-th agent, and $\gamma\in[0,1)$ is a discount factor. Local observations $\{\vz_t^i\}_{i\in\gN}$ are generated as a function of the global state $\textbf{S}_t$. We consider \textit{homogeneous} \glspl{posg}~\cite{terry2020revisiting, marl-book}, i.e., systems where agents have identical~(but individual) observation spaces, action spaces, and reward functions. Agents will then learn a policy with shared parameters~\cite{agarwal2020learning} that aims to maximize the sum of discounted local rewards. In fully cooperative tasks, agents share the same objective. In mixed (cooperative-competitive) settings, agents' decisions are also driven by self-interested motives, necessitating local rewards~\cite{marl-book}. 

\paragraph{Hierarchies of Feudal Policies} \Acrlong{frl}~\cite{dayan1992feudal} approaches organize the decision-making structure into a multi-level hierarchy of policies. In \gls{frl}, actions taken by lower levels are conditioned on the goals assigned by upper levels, which are also responsible for propagating reward signals to subordinate agents. 
In the \acrlong{fgrl}~\cite{marzi2024feudal} paradigm, feudal dependencies are represented with a hierarchical graph $\hiergraph$, each node corresponding to a level-specific entity. In particular, (sub-)managers are abstract nodes sending goals to their subordinates, while workers represent sub-components of the agent~(e.g., actuators). Each level maximizes a separate payoff signal: the top-level manager collects rewards directly from the environment, while lower levels receive an intrinsic reward based on the alignment with received commands. 

\section{Learning Multi-Agent Hierarchical Message-Passing Policies}~\label{sec:methodology} 

Existing architectures for learning hierarchies of feudal policies require designing ad-hoc reward mechanisms, which can result in optimization issues~\cite{vezhnevets2017feudal,marzi2024feudal}. 
Applying such methodologies directly to multi-agent systems would suffer from the same drawbacks. 
Furthermore, in \gls{marl}, training the collection of policies by relying on a shared~(global) reward neglects the individual contribution of each agent and assumes a fully cooperative scenario. 
To address these challenges, \gls{method} learns a feudal hierarchy of message-passing policies in a decentralized fashion by independently optimizing level-specific signals. 
In \gls{method}, the reward for each agent is defined based on the advantage function of the associated (sub-)manager at the upper level. Unlike previous works on multi-agent \gls{hrl}~\cite{xu2023haven, liu2023hierarchical}, agent-specific learning signals are generated by locally evaluating the contribution of each worker or (sub-)manager, rather than in a global fashion. 
Furthermore, in \gls{method}, hierarchical relationships can evolve over time.
The following section introduces the proposed methodology. 
As reference, we consider a $3$-level hierarchical graph~(refer to \autoref{fig:reward_assignment}, left). After summarizing the decision-making process, we illustrate how to construct the hierarchical graph $\hiergraph_t$ and use it to assign the level-specific rewards. Then, we provide details on the training routine and theoretical guarantees on the soundness of the proposed reward scheme.  We refer the reader to~\autoref{app:algorithm} for a schematic overview of the algorithm, in which we divide it into distinct phases that align with~\autoref{sec:himpo_dmp}.

\subsection{\gls{method} Decision-Making Process}~\label{sec:himpo_dmp} 
We now detail the decision-making process for a $3$-level implementation of \gls{method}.
At each step $t$, we represent the policies at different levels through a hierarchical graph $\hiergraph_t$. Each worker $w$ at the base level corresponds to an agent, i.e., $w\in\gN$; the set $\gN_\sigma$ of sub-managers $s$ (intermediate level) is derived from the hierarchy (see~\autoref{sec:build_hierarchy}); the top-level manager $m$ is a single node at the top of the hierarchy $\hiergraph_t$.
In \hlcyan{\textbf{Phase 1}}, information is processed within $\hiergraph_t$. We denote node-level representations at round $l$ of message passing as $\vh_t^{w,l}$~(workers), $\vh_t^{s,l}$~(sub-managers), and $\vh_t^{m,l}$~(top-level manager).
Processing functions can be implemented using, e.g., MLPs. 
Each worker (agent) $w$ initializes its representations $\vh_t^{w,0}$ by encoding the corresponding raw observation $\vz_t^w$. 
Then, the representation $\vh_t^{s,0}$ of each sub-manager $s\in\gN_\sigma$ is generated by processing and aggregating the representations of its subordinate workers; this workers' partition $\mathcal{C}_t^s$ is derived from $\hiergraph_t$. 
Similarly, the representation $\vh_t^{m,0} $ of the top-level manager $m$ is initialized based on sub-managers' representations. Since there is a single top-level manager $m$ at the highest level of the hierarchy, we set $\vh_t^{m} \doteq\vh_t^{m,0} $, as it has no neighboring nodes to propagate information with. Conversely, workers and sub-managers propagate information through their level-specific graph by performing $L_r$ message-passing rounds~\cite{gilmer2017neural} as 
\begin{equation}
    \vh^{i,l+1}_{t} = {\upd}^i_l\Big(\vh^{i,l}_{t}, \aggr_{j \in \mathcal{B}_t(i)}\Big\{\textsc{Msg}^i_l\big(\vh^{i,l}_{t}, \vh^{j,l}_{t}\big)\Big\}\Big),\label{eq:mp}
\end{equation}
where $\mathcal{B}_t(i)$ denotes the neighbors of the $i$-th node at time step $t$, $\upd^i_l$ and $\textsc{Msg}^i_l$ indicate learnable update and message function~(e.g., MLPs), and $\aggr$ is a permutation-invariant aggregation function~(e.g., the mean). In practice, update and message functions are shared among nodes belonging to the same level. Then, based on such representations, the top-level manager $m$ and each $s$-th sub-manager send goals $\vg^{m{\scriptscriptstyle\to} s}_t\in \sR^{d_\mu}$ and $\vg^{s{\scriptscriptstyle\to} w}_t\in \sR^{d_\sigma}$, respectively ({\hlyellow{\textbf{Phase 2}}}), and each $w$-th worker~(agent) takes an action $\va^{w}_t \in \gA$ ({\hlred{\textbf{Phase 3}}}) based on its local observation $\vo_{t}^{i}$ as 
\begin{align}
    \label{eq:pomdp}
    \begin{aligned}
    &\vg^{m\to s}_t = \pi_\mu(\vo_t^{m\to s}),&  &\vo_t^{m\to s} = \vh_t^{m} || \vh_t^{s,0} || \vh_t^{s,L_r}\quad &\text{(manager)}\\
    &\vg^{s\to w}_t = \pi_\sigma(\vo_t^{s\to w}),&  &\vo_t^{s\to w} = \vg^{m\to s}_t || \vh_t^{s,L_r} || \vh_t^{w,0} || \vh_t^{w,L_r}\quad &\text{(sub-manager)}\\
    &\va^{w}_t = \pi_\omega(\vo_t^{w}),&  &\vo_t^{w} =  \vg^{s\to w}_t || \vh_t^{w,0} || \vh_t^{w,L_r}\quad &\text{(worker)}
\end{aligned}
\end{align} 
where $\pi_\mu, \pi_\sigma, \pi_\omega$ indicate the policies of the manager, sub-managers, and workers, respectively. We remark that policies are shared across nodes at the same level, i.e., we learn a single level-specific policy for all the workers ($\pi_\omega$) and sub-managers ($\pi_\sigma$). Each policy effectively acts in a~(learned) latent \gls{pomdp}~\cite{kaelbling1998planning}, with observations and actions as in \autoref{eq:pomdp} and rewards defined in the next subsection. As such, for each node $i\in\hiergraph_t$ with level-specific policy $\pi_\square$, we can define trajectories $\tau_i$, value $V^{\pi_\square} (\vo_{t}^{i})$ and advantage $A^{\pi_\square} (\vo_{t}^{i}, \pi_\square(\vo_{t}^{i}))$ functions associated with observations and actions~(goals).
Upper levels operate at different time scales: we assume that the top-level manager and sub-managers send goals every $\alpha_\mu$ and $\alpha_\sigma$ environment step, respectively. We maintain temporal consistency within the hierarchy, i.e., $\alpha_\mu\geq\alpha_\sigma\geq 1$, so that high-level goals are propagated less frequently than low-level ones~\cite{dayan1992feudal}. To synchronize goal updates across levels, we set $\alpha_\mu = K\alpha_\sigma\doteq K\alpha$, with $K\in\mathbb{Z}^+$.
Similar to other works on \gls{hrl}~\cite{vezhnevets2017feudal,xu2023haven}, $K$ and $\alpha$ are hyperparameters that can be tuned according to the problem.

\begin{figure}[t]
\centering
\includegraphics[width=\textwidth]{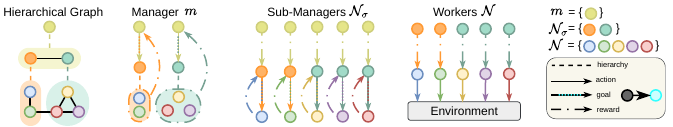}
\vspace{-0.45cm}
\caption{Assignment schemes of rewards and goals in a $3$-level hierarchical graph (up to temporal scales). Dashed lines indicate hierarchical relationships; solid arrows denote worker actions; solid arrows with colored dots indicate assigned goals; dash-dot arrows represent rewards.}
\label{fig:reward_assignment}
\end{figure}

\subsection{Building the Hierarchical Graph}\label{sec:build_hierarchy}
The graph $\hiergraph_t$ encoding the feudal hierarchy is built according to the environment structure. Following the Markovian assumption, it can be defined as a function of the global state $\textbf{S}_t$. 
In a $3$-level hierarchy, the set $\gN_\sigma$ of sub-managers can be dynamically learned~\cite{liu2025hygma} or defined based on prior knowledge~\cite{marzi2024feudal}. We refer the reader to~\autoref{app:sampling_hier_graph} for a visual example where we assign each sub-manager to a portion of the state space. All sub-managers are trivially connected to a single top-level manager, while the feudal relationships between workers and sub-managers can be derived from a subset of the observation features (e.g., agents' locations) or structural constraints (e.g., communication channels). 
We found agents' positions to provide a good enough bias to generate the hierarchical and relational structure without requiring additional knowledge on the environment~(see \autoref{sec:exp}).
For the upper levels of the hierarchy (sub-managers), we use fully-connected graphs to exchange messages, although more sophisticated solutions can be explored.
Note that the proposed methodology makes no assumptions about the graph used for message passing or the structure of the hierarchy.
When no prior knowledge exists, a viable approach is to connect all the workers directly to a single top-level manager, forming a static $2$-level hierarchy suitable for problems with no clear intermediate clustering~(see \autoref{sec:exp_lbfws}).
Hierarchical relationships can also be learned end-to-end~\cite{DBLP:journals/corr/abs-2208-05470,ijcai2024p434}, e.g., by relying on graph pooling operators~\cite{grattarola2022understanding}. Although more general, this approach introduces overheads in both computational and sample complexity. To keep the scope of the paper contained, we do not explore this direction.

\subsection{Hierarchical Assignment of Rewards}\label{sec:rewards}
In \gls{method}, (sub-)managers operate at different time scales, and the hierarchical graph $\mathcal{G}_t^*$ can be dynamic. Rewards must be assigned locally according to the topology of $\hiergraph_t$ to avoid sub-optimal solutions. If misspecified, rewards for intermediate levels can lead to undesirable behaviours, such as sub-managers competing against each other. \autoref{fig:reward_assignment} summarizes the reward-assignment scheme.

\paragraph{Manager} Each goal sent by the top-level manager is assigned a local learning signal, based on the rewards obtained by the subset of workers supervised by the sub-manager receiving the goal. 
In particular, the reward associated with the goal $\vg_{t}^{m{\scriptscriptstyle\to} s}$ sent by the manager $m$ to the $s$-th sub-manager at time step $t$ reads:
\begin{equation}
r_{t}^{m{\scriptscriptstyle\to} s} = \underset{w\in \mathcal{C}^{s}_t}{\meanset} \Biggl\{\sum_{k=0}^{K\alpha-1}\bar{r}_{t+k}^{w}\Biggr\},
\label{eq:reward_manager}
\end{equation}
where $\mathcal{C}^{s}_t$ denotes the partition of workers associated with sub-manager $s$ at step $t$ and $\{\bar{r}^{w}_{k}\}_{w=1}^N$ is the set of environment rewards at the $k$-th time step. In other words, $r_{t}^{m{\scriptscriptstyle\to} s}$ is defined as the average cumulative reward obtained by agents in $\mathcal{C}^{s}_t$ over the $K\alpha$ time steps associated with the goal $\vg^{m{\scriptscriptstyle\to} s}_t$. 
Notice that, for the duration of goal $\vg^{m{\scriptscriptstyle\to} s}_t$, changes in the subset of workers~(and topology $\hiergraph_t$) are not observed by the manager $m$. Thus, the cumulative reward $r_{t}^{m{\scriptscriptstyle\to} s}$ is computed by considering workers that are subordinate to $s$ at time $t$, i.e., the step when the goal was sent. We compute average (rather than total) cumulative rewards to account for variations in the number of subordinate workers w.r.t.\ future time steps. The reward in~\autoref{eq:reward_manager} allows us to define the manager’s advantage function in the latent \gls{pomdp}:
\begin{equation*}
    A^{\pi_\mu}(\vo_{t}^{m{\scriptscriptstyle\to} s}, \vg_{t}^{m{\scriptscriptstyle\to} s}) =\underset{\tau_m}{\mathbb{E}}\bigl[ r_{t}^{m{\scriptscriptstyle\to} s}+ \gamma V^{\pi_\mu} (\vo_{t+K\alpha}^{m{\scriptscriptstyle\to} s}) - V^{\pi_\mu} (\vo_{t}^{m{\scriptscriptstyle\to} s})\bigr]
\end{equation*}

\paragraph{Sub-managers}
Learning signals for lower levels must be aligned with the objectives set by the upper levels. We define the reward associated with each sub-manager $s$ according to the performance that manager $m$ achieved by assigning the goal $ \vg_{t}^{m{\scriptscriptstyle\to} s}$ to $s$. 
In particular, the reward signals are defined as the manager’s advantage function $A^{\pi_\mu} (\vo_{t}^{m{\scriptscriptstyle\to} s}, \vg_{t}^{m{\scriptscriptstyle\to} s})$, scaled by the number $K$ of sub-manager actions over the $K\alpha$ interval. Maximizing the manager's advantage function implies taking actions that lead to higher returns, encouraging sub-managers to conform to the set goals.
Note that $A^{\pi_\mu} (\vo_{t}^{m{\scriptscriptstyle\to} s}, \vg_{t}^{m{\scriptscriptstyle\to} s})$ is conditioned on the goal $ \vg_{t}^{m{\scriptscriptstyle\to} s}$ sent by the top-level manager. However, $\vg_{t}^{m{\scriptscriptstyle\to} s}$ is evaluated by aggregating cumulative rewards among workers in $\mathcal{C}_{t}^s$. In particular, from the perspective of the sub-manager $s$, $A^{\pi_\mu} (\vo_{t}^{m{\scriptscriptstyle\to} s}, \vg_{t}^{m{\scriptscriptstyle\to} s})$ is the same regardless of the $K$ number of times in which $s$ acts in the $K\alpha$ steps and the number $|\mathcal{C}_{t}^s|$ of goals sent at each of the step. 
Thus, we add a local component to the reward associated with each goal to distinguish it from goals assigned to other workers or at different time steps. Specifically, we use the cumulative reward received by worker $w$ over the subsequent $\alpha$ steps. As a result, the reward associated to $\vg_{t}^{s{\scriptscriptstyle\to} w}$ reads:
\begin{equation}
\label{eq:reward_submanager}
    r^{s{\scriptscriptstyle\to} w}_t = \frac{A^{\pi_\mu} (\vo_{t}^{m{\scriptscriptstyle\to} s}, \vg_{t}^{m{\scriptscriptstyle\to} s})}{{K}}  + \sum_{k = 0}^{{\alpha}-1}\bar{r}_{t+k}^w,
\end{equation}
and the corresponding sub-manager’s advantage function is:
\begin{equation*}
    A^{\pi_\sigma} (\vo_{t}^{s{\scriptscriptstyle\to} w}, \vg_{t}^{s{\scriptscriptstyle\to} w}) =\underset{\tau_s}{\mathbb{E}}\bigl[ r_{t}^{s{\scriptscriptstyle\to} w} + \gamma_\sigma V^{\pi_\sigma} (\vo_{t+\alpha}^{s{\scriptscriptstyle\to} w}) - V^{\pi_\sigma} (\vo_{t}^{s{\scriptscriptstyle\to} w})\bigr].
\end{equation*}

\paragraph{Workers}
We define rewards for workers similarly to sub-managers.
In particular, each worker $w$ receives a reward consisting of the advantage function $A^{\pi_\sigma} (\vo_{t}^{{s{\scriptscriptstyle\to} w}}, \vg_{t}^{s{\scriptscriptstyle\to} w})$ of the sub-manager $s$ from which the goal $\vg_{t}^{s{\scriptscriptstyle\to} w}$ has been sent, scaled by the duration ${\alpha}$, i.e., as
\begin{equation}
\label{eq:reward_worker}
    {r}_t^{w} = \frac{A^{\pi_\sigma} (\vo_{t}^{{s{\scriptscriptstyle\to} w}}, \vg_{t}^{s{\scriptscriptstyle\to} w})}{{\alpha}}, 
\end{equation}
where we denote the worker's reward as ${r}_t^{w}$ to distinguish it from the external reward $\bar{r}_t^{w}$. For each worker $w$, the reward in \autoref{eq:reward_worker} is distributed among the $\alpha$ actions taken in the interval $[t,t+\alpha)$. 
Note that such rewards are local, i.e., each worker receives a different learning signal. Unlike~\autoref{eq:reward_submanager}, there is no need for adding the external reward here, and, as a result, workers do not have direct access to the environment signals. For dynamic hierarchies, we found that using a modified advantage-like definition of $A^{\pi_\sigma} (\vo_{t}^{{s{\scriptscriptstyle\to} w}}, \vg_{t}^{s{\scriptscriptstyle\to} w})$ yields better results for policy optimization (see \autoref{app:ablation_truncation}).

\paragraph{End-to-end Training}
Following the \gls{frl} paradigm, each level-specific policy is trained independently from the others to maximize its level-specific return. This allows for generating goals at different spatial and temporal scales, enabling task decomposition within the hierarchy~(see~\autoref{sec:exp}). Although level-specific rewards depend on the advantage function at upper levels, ensuring they are aligned with each agent's objective is a significant challenge: we provide theoretical guarantees on this aspect in \autoref{sec:theory}.

\paragraph{Cooperative and Mixed Settings}~\label{par:applicability}  Note that, provided a proper choice of the hierarchy, \Gls{method} can be adopted in \textit{both} fully cooperative and self-interested settings: hierarchies with more than $2$-levels assume that the environment is cooperative, while a $2$-level hierarchy allows for mixed settings. Indeed, in a $3$-level hierarchy, the top-level manager aggregates workers' rewards from the partitions induced by the hierarchical structure $\hiergraph$~(refer to \autoref{eq:reward_manager}). 
Conversely, in $2$-level hierarchies, where workers are all connected to a single top-level manager, each worker's return is maximized separately. 
In this setting, each worker's learning signal is directly given by the manager's advantage function. Note that parameter-sharing can have an impact on the space of learnable multi-agent policies.

\subsection{Theoretical Analysis}\label{sec:theory}
\Gls{method} aims to learn a hierarchy of independent policies where each node receives a level-specific reward. 
To ensure the soundness of the proposed scheme, we must show that each level-specific objective aligns with the global task. 
Let $T_\mu = \{0, K\alpha, 2K\alpha,\dots,\infty\}$ denote the manager time scale, i.e., the time steps $t \in T_\mu$ at which goals $\vg_{t}^{m{\scriptscriptstyle\to} s}$ are sent. 
Given joint policy $\pi \doteq (\pi_\omega, \pi_\sigma, \pi_\mu)$, the global objective consists of maximizing the return of each worker $w$~(agent):
\begin{equation}
\label{eq:joint_return}
    \eta(\pi) =  \underset{\tau\sim \pi}{\mathbb{E}}\Biggl\lbrack \sum_{t\in T_\mu}\gamma^{t/K\alpha} \sum_{k=0}^{K\alpha - 1}\bar{r}_{t+k}^{w}\Biggr\rbrack,
\end{equation}
where $\tau$ is a trajectory and $\gamma\doteq\gamma_\mu$ is the discount factor of the top-level manager. 
The following theoretical results show that maximizing returns obtained from the rewards defined in \autoref{sec:rewards} implies maximizing the return in \autoref{eq:joint_return}. In particular, we assume that each level-specific policy is optimized while keeping the others fixed. Proofs for all the theoretical results are provided in \autoref{app:proofs}.

\begin{theorem}[Manager]
\label{thm:manager_optimization}
The maximization of the return $\eta_m(\pi_\mu)$ defined using the rewards in \autoref{eq:reward_manager} obtained by manager $m$ sending a goal to sub-manager $s$ implies the maximization of $ \eta(\pi)$, i.e.:
\begin{equation}
    \max_{\pi_\mu}\eta_m(\pi_\mu) \doteq  \max_{\pi_\mu}\underset{\tau_m\sim \pi}{\mathbb{E}}\Biggl\lbrack \sum_{{t}\in T_\mu}\gamma^{t/K\alpha} r_{t}^{m{\scriptscriptstyle\to} s} \Biggr\rbrack  =\max_{\pi_\mu}\eta(\pi)
\end{equation}
It follows that the manager's objective $\eta_m(\pi_\mu)$ is aligned with the global task in \autoref{eq:joint_return}. 
\end{theorem}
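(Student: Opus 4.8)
The plan is to show that the two objectives appearing in \autoref{thm:manager_optimization} are in fact the \emph{same} function of $\pi_m$ (with $\pi_s,\pi_w$ held fixed), after which equality of their maxima and of their maximizer sets is immediate. The only step with any content is rewriting the one-step manager reward $r_t^{m\to s}$ of \autoref{eq:reward_manager} into the block-sum of external rewards that appears in $\eta(\pi)$ of \autoref{eq:joint_return}; everything else is bookkeeping about expectations and discounting.

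First I would invoke the structural assumption stated in the ``Cooperative and Mixed Settings'' paragraph: a hierarchy with more than two levels is used only in fully cooperative homogeneous problems, so at every environment step $k$ all workers receive the same external payoff, $\bar r^{w}_{k}=\bar r^{w'}_{k}$ for all $w,w'\in\gN$. Hence, for any nonempty partition cell $\mathcal{C}^{s}_t$, the quantity averaged in \autoref{eq:reward_manager} does not depend on $w$, so $r_t^{m\to s}=\meanset_{w\in\mathcal{C}^{s}_t}\{\sum_{k=0}^{K\alpha-1}\bar r^{w}_{t+k}\}=\sum_{k=0}^{K\alpha-1}\bar r^{w}_{t+k}$; in particular this value is identical for every sub-manager $s$, which explains why the statement can be phrased for a single, arbitrary $s$. (If some $\mathcal{C}^{s}_t$ were empty the average would be vacuous; the construction of \autoref{sec:build_hierarchy} guarantees each sub-manager supervises at least one worker, so this case does not arise, or one simply drops such nodes.)

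Next I would substitute this identity into the definition of $\eta_m(\pi_m)$. Both expectations are taken over a rollout of the joint policy $\pi$ in the environment, and after the rewriting the integrand $\sum_{t\in T_m}\gamma^{t/K\alpha}\sum_{k=0}^{K\alpha-1}\bar r^{w}_{t+k}$ is literally the same $\tau_m$-measurable random variable in the two expressions, the discount being taken at the manager time scale ($\gamma=\gamma_m$) in both by design. Therefore $\eta_m(\pi_m)=\eta(\pi)$ as functions of $\pi_m$, whence $\max_{\pi_m}\eta_m(\pi_m)=\max_{\pi_m}\eta(\pi)$ and every maximizer of one is a maximizer of the other. Absolute convergence of the $t\in T_m$ series, needed to manipulate the expectation term by term, follows from $\gamma<1$ together with boundedness of the external rewards; this is the only analytic point to check.

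The main obstacle here is not difficulty but the \emph{scope of the cooperativity assumption}: the clean collapse of $\meanset_{w\in\mathcal{C}^{s}_t}$ relies on ``fully cooperative'' delivering a common external reward. If one instead only assumes homogeneous reward \emph{functions} with agent-dependent realizations, the equality $\eta_m=\eta$ must be argued in expectation, using exchangeability of the workers under the symmetric joint policy so that $\E[\sum_k\bar r^{w}_{t+k}]$ is independent of $w$; this is delicate because $\mathcal{C}^{s}_t$ is itself a state-dependent random object, so the per-cell average need not be unbiased for a fixed worker's return without an additional balancedness condition. I would present the common-reward version as the main line and only remark on the exchangeability extension.
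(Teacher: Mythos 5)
Your main line and the paper's proof diverge at the very first step, and the divergence matters. You collapse $\meanset_{w\in\mathcal{C}^{s}_t}$ by assuming that ``fully cooperative'' means all workers receive the \emph{same external reward realization} at every step, so that $r_t^{m\to s}$ equals the block-sum of a single worker's rewards pathwise and the two objectives are literally the same random variable. That is not the paper's setting: the \gls{posg} is homogeneous but the payoffs $\gR_i$ are local, and in the environments where a $3$-level hierarchy is actually used (e.g., \textit{VMAS Sampling}) the per-step realizations $\bar{r}^{w}_{t+k}$ genuinely differ across agents. ``Cooperative'' in \autoref{par:applicability} means shared objective, not shared reward signal. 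So your main-line identity $\eta_m(\pi_m)=\eta(\pi)$ does not hold pathwise in the intended setting, and the proof you present as primary covers only a special case.

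The argument you relegate to a closing remark --- exchangeability of workers under the symmetric joint policy, so that $\mathbb{E}\bigl[\sum_{k}\bar{r}^{w}_{t+k}\bigr]$ is worker-independent --- is in fact the paper's actual proof: it conditions on $\textbf{S}_0$ and $\mathcal{C}^{s}_0$, swaps $\meanset_{w\in\mathcal{C}^{s}_0}$ with the trajectory expectation, and invokes homogeneity of the agents to identify each inner expectation with $\eta(\pi)$, yielding $\eta_m(\pi_m)=k_m\,\eta(\pi)$ with $k_m>0$. The delicacy you flag --- that $\mathcal{C}^{s}_t$ is a state-dependent random object, so the per-cell average might not be unbiased for a fixed worker's return --- is real, but the paper resolves it by a convention you missed: as stated at the top of \autoref{app:proofs}, during training each trajectory is evaluated under the static hierarchy $\hiergraph_0$, so the partition in \autoref{eq:reward_manager} is frozen at $\mathcal{C}^{s}_0$ (measurable w.r.t.\ $\textbf{S}_0$) and the mean commutes with the expectation over the remainder of the rollout without any extra balancedness condition. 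In short: your primary argument rests on an assumption the paper does not make, and the argument the paper does make is the one you left as an unfinished aside; to match the theorem as stated you need to promote the exchangeability route to the main line and supply the frozen-partition device to close it.
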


With a slight abuse of notation, let $\hat{\pi}_\omega$ and $\hat{\pi}_\sigma$ denote joint policies $({\pi}_\omega, \tilde{\pi}_\sigma, \tilde{\pi}_\mu) $ and $(\tilde{\pi}_\omega, {\pi}_\sigma, \tilde{\pi}_\mu)$, where $\tilde{\pi}_{i=\omega,\sigma,\mu}$ is the old version of $\pi_{i=\omega,\sigma,\mu}$. 
Furthermore, let $T_\sigma = \{0, \alpha, 2\alpha,\dots,\infty\}$ be the sub-managers' time scale, i.e., the set of time steps $t$ in which goals $\vg_t^{s{\scriptscriptstyle\to} w}$ are propagated.
\begin{theorem}[Sub-managers]
\label{thm:submanager_optimization}
Assuming $\gamma_\sigma\simeq\gamma\simeq1$ and $K$ not too large, the maximization of the return $\eta_s({\pi}_\sigma)$ defined using the rewards in \autoref{eq:reward_submanager} obtained by the sub-manager $s$ sending a goal to the worker $w$ implies the maximization of $ \eta(\hat{\pi}_\sigma)$, i.e.:
\begin{equation}
    \max_{\pi_\sigma}\eta_s({\pi}_\sigma) \doteq
     \max_{\pi_\sigma}    \underset{\tau_s\sim\hat{\pi}_\sigma}{\mathbb{E}}\Biggl[\sum_{t\in T_\sigma}\gamma_\sigma^{t/\alpha} r^{s{\scriptscriptstyle\to} w}_t\Biggr] =
     \max_{\pi_\sigma} \eta({\hat{\pi}_\sigma}) 
\end{equation}
It follows that the sub-manager's objective $\eta_s({\pi}_\sigma)$ is aligned with the global objective in \autoref{eq:joint_return}.
\end{theorem}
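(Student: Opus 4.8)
\textbf{Proof proposal for \autoref{thm:submanager_optimization}.}

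The plan is to split the sub-manager objective into its two summands and show that each is, up to the stated approximations, an affine function of the global return $\eta(\hat\pi_s)$ induced by the joint policy $\hat\pi_s = (\tilde\pi_w, \pi_s, \tilde\pi_m)$. Writing $\eta_s(\pi_s) = \eta_s^{\mathrm{adv}}(\pi_s) + \eta_s^{\mathrm{loc}}(\pi_s)$ with
\[
\eta_s^{\mathrm{adv}}(\pi_s) \doteq \mathbb{E}_{\tau_s\sim\hat\pi_s}\Bigl[\sum_{t\in T_s}\gamma_s^{t/\alpha}\,\tfrac{1}{K}\,A^{\pi_m}\big(\vo_t^{m\to s},\vg_t^{m\to s}\big)\Bigr],
\quad
\eta_s^{\mathrm{loc}}(\pi_s) \doteq \mathbb{E}_{\tau_s\sim\hat\pi_s}\Bigl[\sum_{t\in T_s}\gamma_s^{t/\alpha}\sum_{k=0}^{\alpha-1}\bar r_{t+k}^{w}\Bigr],
\]
I would first handle $\eta_s^{\mathrm{loc}}$: the blocks $\{t,\dots,t+\alpha-1\}$ over $t\in T_s$ tile the environment time axis, so $\sum_{t\in T_s}\sum_{k=0}^{\alpha-1}\bar r_{t+k}^{w} = \sum_{n\ge 0}\bar r_n^{w}$, and under $\gamma_s\simeq\gamma\simeq 1$ all discount factors on the relevant horizon are $\simeq 1$; the same collapse applied to \autoref{eq:joint_return} shows that this flat reward sum equals $\eta(\hat\pi_s)$ in expectation under $\hat\pi_s$, so $\eta_s^{\mathrm{loc}}(\pi_s)\simeq\eta(\hat\pi_s)$.

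For $\eta_s^{\mathrm{adv}}$, I would partition $T_s$ into the manager intervals $[jK\alpha,(j+1)K\alpha)$, each containing exactly $K$ sub-manager steps. Within one such interval the manager acts only once, so $\vo_t^{m\to s}$ and $\vg_t^{m\to s}$, and hence $A^{\pi_m}(\vo_t^{m\to s},\vg_t^{m\to s})$, are constant in $t$ (and, as noted after \autoref{eq:reward_manager}, the manager does not observe any change of $\mathcal{C}_t^s$ for the duration of the goal). With $\gamma_s\simeq 1$ the $K$ copies of $\tfrac{1}{K}A^{\pi_m}$ add up to a single copy of $A^{\pi_m}$ per interval, and matching the sub-manager and manager time scales through $\gamma_s^{jK}\simeq\gamma^{j}$ --- this is where ``$K$ not too large'' enters --- yields
\[
\eta_s^{\mathrm{adv}}(\pi_s) \;\simeq\; \mathbb{E}_{\tau_s\sim\hat\pi_s}\Bigl[\sum_{t\in T_m}\gamma^{t/K\alpha}\,A^{\pi_m}\big(\vo_t^{m\to s},\vg_t^{m\to s}\big)\Bigr].
\]

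Since the sub-manager is trained with the manager and worker policies held fixed at $\tilde\pi_m$ and $\tilde\pi_w$, the $A^{\pi_m}$ in \autoref{eq:reward_submanager} is the advantage of this fixed manager policy while only the trajectory distribution depends on $\pi_s$. The classical performance-difference lemma (the identity underlying trust-region/proximal policy methods), applied in the manager's latent decision process with ``steps'' indexed by $T_m$, discount $\gamma^{t/K\alpha}$, and actions $\vg_t^{m\to s}$, then identifies the right-hand side above with $\eta_m(\hat\pi_s) - c$, where $\eta_m(\hat\pi_s)$ is the manager return under $\hat\pi_s$ and $c$ is the baseline manager return, independent of $\pi_s$. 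Applying \autoref{thm:manager_optimization} to substitute $\eta_m$ with $\eta$ gives $\eta_s^{\mathrm{adv}}(\pi_s)\simeq\eta(\hat\pi_s) - c$. Adding the two contributions, $\eta_s(\pi_s)\simeq 2\,\eta(\hat\pi_s) - c$ with $c$ independent of $\pi_s$, so maximizing $\eta_s$ over $\pi_s$ is equivalent to maximizing $\eta(\hat\pi_s)$, which is the claim. I expect the principal obstacle to be making this chain of approximations precise and mutually consistent: collapsing the two nested discounted sums, matching the sub-manager and manager time scales under $\gamma_s\simeq\gamma\simeq 1$ with $K$ controlled, and setting up the performance-difference lemma correctly in the hierarchical setting --- keeping straight which policies are frozen, that the advantage is evaluated at the old manager policy rather than the updated one, and that the possibly dynamic partition $\mathcal{C}_t^s$ stays invisible to the manager within a single goal, so that the regrouping of $T_s$ into manager intervals is legitimate.
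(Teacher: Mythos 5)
Your proposal is correct and follows essentially the same route as the paper's proof: the same split into the scaled manager-advantage term and the local cumulative-reward term, the same regrouping of $T_s$ into manager intervals using the constancy of $A^{\tilde{\pi}_m}$ over each goal, the same use of $\gamma_s\simeq\gamma\simeq 1$ and $K$ not too large, and the same appeal to the performance-difference identity (the paper's Lemma~\ref{lemma:adv_over_old_policy}) combined with \autoref{thm:manager_optimization}. The only differences are cosmetic bookkeeping of the structural constants ($k_m$, $k_s$ versus your ``$2$'' and ``$c$''), which do not affect the argmax.
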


\begin{theorem}[Workers]
\label{thm:worker_optimization}
Assuming $\gamma_\omega\simeq\gamma_\sigma\simeq 1$ and $\alpha$ not too large, the maximization of the return $\eta_w({\pi}_\omega)$ defined using the rewards in \autoref{eq:reward_worker} obtained by the worker $w$ implies the maximization of $ \eta(\hat{\pi}_\omega)$, i.e.:
\begin{equation}
    \max_{\pi_\omega} \eta_w({\pi}_\omega) \doteq
     \max_{\pi_\omega} \underset{\tau_w\sim\hat{\pi}_\omega}{\mathbb{E}}\Biggl[\sum_{t = 0}^\infty\gamma^t_\omega r^{ w}_t\Biggr]  = \max_{\pi_\omega}\eta(\hat{\pi}_\omega)
\end{equation}
It follows that the worker's objective $\eta_w({\pi}_\omega)$ is aligned with the global objective in \autoref{eq:joint_return}.
\end{theorem}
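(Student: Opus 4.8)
\emph{Proof plan.} The argument parallels the proof of \autoref{thm:submanager_optimization}: I reduce the worker's shaped return to the sub-manager's return one level up, and then close using the alignment already established there. \emph{Step 1 (regroup onto the sub-manager time scale).} Starting from $\eta_w(\pi_w)=\mathbb{E}_{\tau_w\sim\hat\pi_w}\big[\sum_{t\ge 0}\gamma_w^t r_t^{w}\big]$ and substituting \autoref{eq:reward_worker}, note that the shaped reward equals $A^{\pi_s}(\vo_{t_0}^{s\to w},\vg_{t_0}^{s\to w})/\alpha$ for every $t$ in the block $[t_0,t_0+\alpha)$, $t_0\in T_s$; summing the inner geometric series gives $\eta_w(\pi_w)=\tfrac{1-\gamma_w^{\alpha}}{\alpha(1-\gamma_w)}\,\mathbb{E}_{\hat\pi_w}\big[\sum_{t_0\in T_s}\gamma_w^{t_0}\,A^{\pi_s}(\vo_{t_0}^{s\to w},\vg_{t_0}^{s\to w})\big]$. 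Under the hypotheses $\gamma_w\simeq\gamma_s\simeq 1$ with $\alpha$ moderate the prefactor is $\approx 1$ and $\gamma_w^{t_0}\approx\gamma_s^{t_0/\alpha}$, so $\eta_w(\pi_w)\approx\mathbb{E}_{\hat\pi_w}\big[\sum_{t_0\in T_s}\gamma_s^{t_0/\alpha}A^{\pi_s}(\vo_{t_0}^{s\to w},\vg_{t_0}^{s\to w})\big]$.

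\emph{Step 2 (a performance-difference identity at the sub-manager level).} Regard $\pi_w$ as the controller shaping the trajectory distribution of the sub-manager's latent POMDP, with $\tilde\pi_s$, $\tilde\pi_m$ and the reward $r^{s\to w}$ held fixed and the advantage $A^{\pi_s}$ evaluated through value functions fitted to the previous rollouts (i.e.\ under $\tilde\pi_w$). Writing $A^{\pi_s}(\vo_{t_0}^{s\to w},\vg_{t_0}^{s\to w}) = Q^{\pi_s}(\vo_{t_0}^{s\to w},\vg_{t_0}^{s\to w}) - V^{\pi_s}(\vo_{t_0}^{s\to w})$ and expanding $Q^{\pi_s}$ over one sub-manager step, the bootstrap terms telescope along the trajectory, leaving $\mathbb{E}_{\hat\pi_w}\big[\sum_{t_0\in T_s}\gamma_s^{t_0/\alpha}A^{\pi_s}(\vo_{t_0}^{s\to w},\vg_{t_0}^{s\to w})\big]=\eta_s(\hat\pi_w)-\mathbb{E}[V^{\pi_s}(\vo_0^{s\to w})]$, where $\eta_s(\hat\pi_w)\doteq\mathbb{E}_{\hat\pi_w}[\sum_{t_0\in T_s}\gamma_s^{t_0/\alpha}r_{t_0}^{s\to w}]$ is the sub-manager return of \autoref{thm:submanager_optimization} now viewed as a function of $\pi_w$, and $\mathbb{E}[V^{\pi_s}(\vo_0^{s\to w})]$ is independent of $\pi_w$ (the initial latent observation is a fixed function of $\vz_0$ and the fitted value function is frozen during the update). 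Hence $\argmax_{\pi_w}\eta_w(\pi_w)=\argmax_{\pi_w}\eta_s(\hat\pi_w)$.

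\emph{Step 3 (close via \autoref{thm:submanager_optimization}).} By the decomposition of $r^{s\to w}$ from \autoref{eq:reward_submanager} used in the proof of \autoref{thm:submanager_optimization}, $\eta_s(\hat\pi_w)$ equals $\eta(\hat\pi_w)$ — the global objective of \autoref{eq:joint_return} — up to a positive multiplicative factor and an additive term governed by the frozen manager critic, the latter co-aligned with $\eta$ by \autoref{thm:manager_optimization}; in particular no assumption on $K$ or $\gamma_m$ is needed here, since those only enter the now-frozen upper-level terms. Therefore $\argmax_{\pi_w}\eta_s(\hat\pi_w)=\argmax_{\pi_w}\eta(\hat\pi_w)$, and chaining the three steps gives $\max_{\pi_w}\eta_w(\pi_w)=\max_{\pi_w}\eta(\hat\pi_w)$, as claimed.

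\emph{Main obstacle.} The delicate point is Step 2. Unlike the textbook performance-difference lemma, varying $\pi_w$ perturbs the \emph{transition kernel} of the sub-manager's latent POMDP — the $\alpha$-step worker behaviour between consecutive goals — rather than an action distribution, so the telescoping of the $V^{\pi_s}$ terms is exact only up to a residual measuring the mismatch between the $\tilde\pi_w$- and $\pi_w$-induced kernels (the same residual reappears in the manager-level telescoping invoked in Step 3). One needs this residual to be negligible; it is plausibly so because the worker's latent observation $\vo^{w}$ contains the received goal, so the worker's behaviour is tightly coupled to the sub-manager's action, and because the per-step discrepancy is $O(1-\gamma_s)$ and is damped as $\gamma_s\to 1$ with $\alpha$ kept moderate. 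Making this rigorous — rather than relying on the approximations licensed by the theorem's hypotheses — is where the real work lies; Steps 1 and 3 are routine bookkeeping analogous to \autoref{thm:manager_optimization} and \autoref{thm:submanager_optimization}.
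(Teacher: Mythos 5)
Your proposal follows essentially the same route as the paper's proof: regroup the constant per-block reward onto the sub-manager time scale to obtain the $\tfrac{1-\gamma_w^{\alpha}}{\alpha(1-\gamma_w)}$ prefactor, expand the sub-manager advantage in one-step TD form so the value terms telescope to $\eta_s(\hat{\pi}_w)$ minus a $\pi_w$-independent initial-value term, and then chain through \autoref{thm:submanager_optimization} and Corollary~\ref{corollary:optimization}. The kernel-shift subtlety you flag in your ``main obstacle'' paragraph is real but is likewise left implicit in the paper's own telescoping step (the paper handles it only by evaluating trajectories under a frozen hierarchy and, for dynamic hierarchies, via the truncated advantage of \autoref{eq:dyn_adv_submanager}), so your argument matches the paper's level of rigor.
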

Note that the theoretical results remain valid when using a $2$-level hierarchy. In such a case, the learning signal of each worker consists of the manager's advantage function. 
In practice, although the theoretical results assume that each policy is optimized while keeping the others fixed, we learn policies across different levels concurrently to improve sample efficiency; previous works empirically showed that this does not compromise performance~\cite{li2019hierarchical}. 

\section{Related Work}\label{sec:related}

Within communication-based methodologies~\cite{zhu2024survey}, message-passing architectures have been employed to parameterize policies~\cite{agarwal2020learning,niu2021multi} and enhance global awareness during training and execution without relying on global information~\cite{nayak2023scalable}. Other works follow a similar approach in the context of multi-robot systems~\cite{khan2020graph,blumenkamp2022framework,bettini2023heterogeneous}. \glspl{gnn} have also been used to learn action-value functions in different settings.
\citet{Jiang2020Graph} leverage an attention mechanism based on the agent graph to learn individual action-value functions, while other works~\cite{naderializadeh2020graph,kortvelesy2022qgnn} apply \glspl{gnn} to learn a global value function in fully cooperative scenarios.
Other \gls{comm-madrl} approaches either broadcast messages through static communication structures~\cite{foerster2016learning,sukhbaatar2016learning,das2019tarmac} or directly learn the propagation scheme~\cite{singh2018individualized,jiang2018learning,li2021deep,hu2024learning}. 
Communication has also been leveraged in several \gls{hrl} approaches~\cite{ryu2020multi, WANG2023359,liu2023deep, yang2023hierarchical}, but, to the best of our knowledge, none of these methods learn level-specific policies using independent, agent-specific reward signals. 
Concerning single-agent \gls{hrl} settings,~\citet{li2019hierarchical} addressed the issue of designing a problem-free reward function for low-level policies by leveraging the advantage function of high-level policies. Subsequently, this reward scheme has been extended to multi-agent systems~\cite{xu2023haven, liu2023hierarchical}, where architectures are trained under the \gls{ctde} paradigm using QMIX~\cite{rashid2020monotonic}. 
Unlike our approach, here low-level optimization signals are based on the team reward, i.e., actions are not explicitly evaluated according to their local contributions~\cite{10.5555/3237383.3238080}. Moreover, such methods are restricted to $2$-level hierarchies. 
We also note that value factorization~\cite{rashid2020monotonic} enforces global action optimality through specific architectural constraints.
This is fundamentally different from our theoretical contribution, which guarantees that the proposed advantage-based hierarchical rewards are aligned with the global task.
The \gls{frl}~\cite{dayan1992feudal} framework has been first extended to the deep RL setting~\cite{vezhnevets2017feudal} and, subsequently, applied to multi-agent systems~\cite{ahilan2019feudal}. In this context, it showed performance improvements across diverse domains, e.g., traffic signal control~\cite{ma2020feudal,ma2022feudal} or warehouse logistics~\cite{krnjaic2022scalable}. However, none of these methods adopts a problem-free reward scheme nor considers message-passing policies. 
Lastly, \gls{frl} has been exploited to learn hierarchical graph-based modular policies in single-agent RL~\cite{marzi2024feudal}.

\section{Experiments}\label{sec:exp}

In the following, we evaluate \gls{method} on relevant \gls{marl} benchmarks by considering problems of increasing complexity and with a varying number of interacting agents. We use Proximal Policy Optimization~(PPO)~\cite{DBLP:journals/corr/SchulmanWDRK17} as the underlying policy optimization algorithm and refer to this specific implementation of \gls{method} as \gls{himppo}.

\paragraph{Baselines and benchmarks}
We compare our methodology against state-of-the-art \gls{marl} methods. We consider $1)$ \gls{ippo} and $2)$ \gls{mappo}, i.e., two \gls{marl} variants of PPO proposed by~\citet{yu2022surprising}. Concerning graph-based methods, we include in the comparison $3)$ the homogeneous version of \gls{gppo}~\cite{bettini2023heterogeneous}, $4)$ \acrshort{hgppo}, a hierarchical variant of GPPO akin to HAMA~\cite{ryu2020multi} that assumes an underlying fully-connected graph, and $5)$ \acrshort{hguppo}, a hierarchical PPO architecture based on Graph U-Nets~\cite{gao2019graph}. Furthermore, in~\autoref{app:app_comm_methods} we compare HiMPPO against two additional communication methods that are not based on PPO, i.e., $6)$ DGN~\cite{Jiang2020Graph} and $7)$ MAGIC~\cite{niu2021multi}. The main difference of our method with the hierarchical baselines is that the latter use a hierarchical GNN to learn a single weight-sharing policy, while \gls{himppo} leverages a (dynamic) hierarchical graph to learn level-specific policies and assign rewards.
We consider $2$ main environments under several different settings: $1)$ {\textit{Level-Based Foraging with Survival}~(LBFwS), i.e, }a novel version of the LBF~\cite{albrecht2013game,papoudakis2021benchmarking} environment, to show how the models perform in tasks with challenging temporal credit assignment; $2)$ the \textit{VMAS Sampling} environment from the VMAS simulator~\cite{bettini2022vmas}, to assess the exploration capabilities of the baselines. In \textit{LBFwS} we consider three levels of increasing difficulty, while for \textit{VMAS Sampling} we test three different team sizes. Additionally, in~\autoref{app:app_smac} we report results on $3)$ \textit{SMACv2}~\cite{ellis2023smacv2}, i.e., the improved version of the \textit{StarCraft Multi-Agent Challenge}~(\textit{SMAC})~\cite{samvelyan2019starcraft}. 
For the models using graph-based representations, we use a dynamic graph based on the location of the agents at each time step. In particular, two agents are connected if they are within their observation ranges. Due to the different computational requirements of the environments, for each configuration, we consider $8$ independent seeds in \textit{LBFwS}, $6$ in \textit{VMAS Sampling}, and $4$ in \textit{SMACv2}.
Additional details are reported in \autoref{app:exp_setting}.

\subsection{Level-Based Foraging with Survival~(LBFwS)}~\label{sec:exp_lbfws}
In \textit{LBFwS}, agents navigate in a grid world where resources of different value are randomly spawned~(\autoref{app:app_lbfws}). Consuming resources increases the individual reward, while delivering them to a fixed location extends the episode duration. Higher-value items need the cooperation of multiple agents to be obtained. An agent can learn either to simply maximize its reward~(individual strategy) or to sacrifice the local reward to extend the episode and possibly achieve a higher final return~(cooperative strategy). In particular, we consider a system of $10$ agents and three difficulty levels~(\emph{Easy}, \emph{Medium}, and \emph{Hard}) that correspond to progressively larger grid sizes and varying amounts of resources~(see \autoref{app:app_lbfws}). 
For \gls{himppo}, we consider a static $2$-level hierarchical graph with all the agents~(workers) connected to a single top-level manager that sends goals every $K=5$ steps. We investigate the impact of the goal scale $K$ in \autoref{app:app_goal_scale}.

\begin{figure*}[t]
\centering
\includegraphics[width=\textwidth]{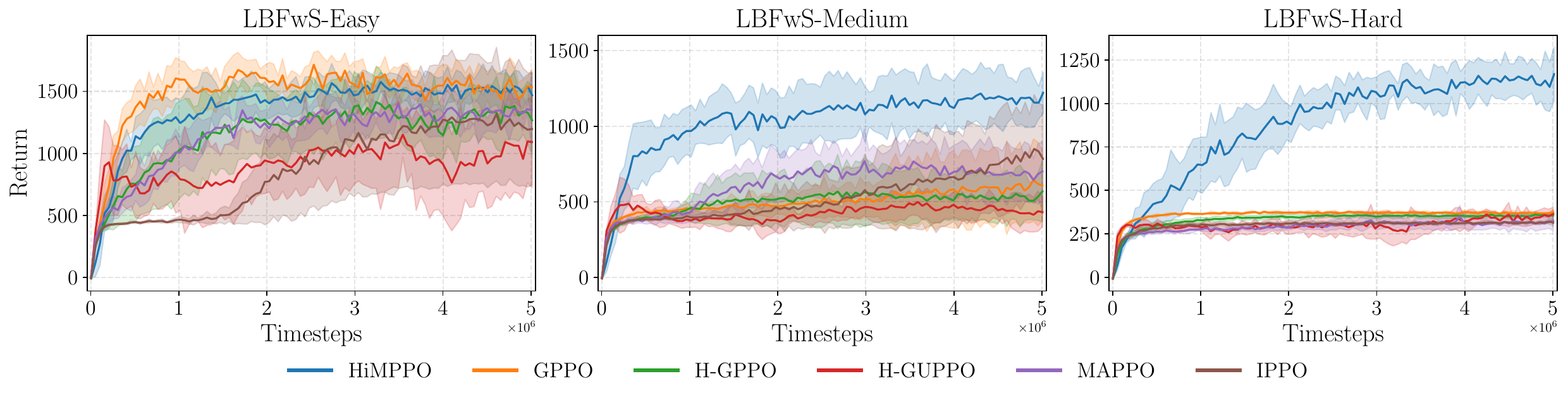}
\vspace{-0.7cm}
\caption{Results of the models for different difficulty levels in the \textit{LBFwS} environment~(avg. return $\pm$ std. of $8$ runs).}
\label{fig:lbf_results}
\end{figure*}

Results are reported in \autoref{fig:lbf_results}. 
All the models can achieve high returns in the \textit{LBFwS-Easy} scenario, with \gls{himppo} and GPPO being more sample efficient. However, as the difficulty increases, the baselines struggle to achieve good results. Notably, in \textit{LBFwS-Hard}, all the methods except \gls{himppo} learn to maximize the individual reward. Conversely, \gls{himppo} consistently achieves good results at every difficulty level. These results show the effectiveness of \gls{method} in coordination and planning capabilities. Furthermore, the performance achieved by other hierarchical methods (\acrshort{hgppo} and \acrshort{hguppo}) shows that relying on external reward fails in more complex scenarios, supporting the adoption of our reward-assignment mechanism.

\begin{figure*}[t]
\centering
\includegraphics[width=\textwidth]{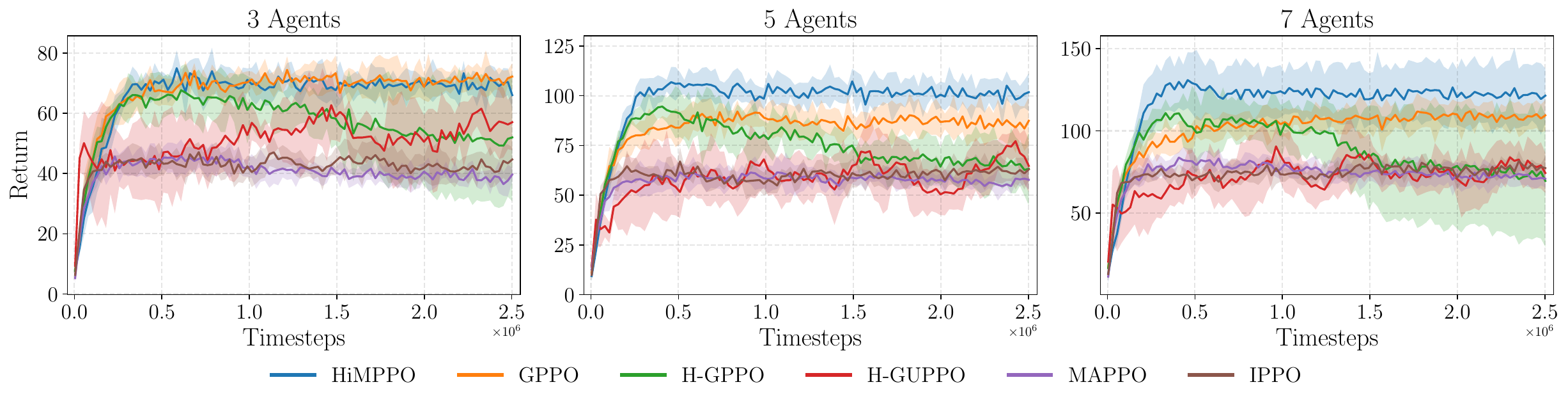}
\vspace{-0.7cm}
\caption{Results of the models for different numbers of agents in the \textit{VMAS Sampling} environment~(avg. return $\pm$ std. of $6$ runs).}
\label{fig:sampling_results}
\end{figure*}

\subsection{VMAS Sampling}~\label{sec:exp_vmas}
In this continuous control problem, $n$ robots are randomly spawned in a grid. The grid has an underlying Gaussian density function with $3$ modes. Visiting a cell for the first time leads to a reward proportional to the density function. 
For \gls{himppo}, the dynamic $3$-level hierarchical graph $\hiergraph_t$ is generated according to the positions of the robots at each step $t$~(refer to \autoref{app:sampling_hier_graph}); the goal scales are set to $K=2$ and $\alpha = 5$. 
In the advantage-like definition of the workers' rewards~(\autoref{app:proof_worker}), future signals are defined using a one-step truncation on the sequence of sub-managers' value functions~(refer to~\autoref{eq:dyn_adv_worker}), as empirical evidence showed better performance. \autoref{app:ablation_truncation} further analyzes the impact of considering different credit assignment schemes.

The results in \autoref{fig:sampling_results} show that \gls{himppo} achieves higher overall returns as the difficulty (i.e., the number of agents) increases, showing the benefits of relying on hierarchical policies to coordinate exploration in challenging scenarios. Conversely, we observed high training instability in the other hierarchical baselines (\acrshort{hgppo} and \acrshort{hguppo}), further confirming the effectiveness of our novel reward-assignment mechanism.

\subsection{Ablation Studies}

In this section, we perform a set of ablation studies to assess the impact and effectiveness of the proposed designs. 
Additional experiments and analyses are reported in~\autoref{app:app_other_experiments}.

\paragraph{Reward Assignment} 
To further validate the effectiveness of the hierarchical reward-assignment scheme proposed in \autoref{sec:rewards}, we compare \gls{himppo} against different variants in both \textit{LBFwS} and \textit{VMAS Sampling}. 
In particular, we consider $1)$ \gls{himppo-fl}, where we expose workers to the external reward by adding it to \autoref{eq:reward_worker}, and $2)$ \gls{himppo-er}, where workers' rewards are based only on the external signal~(the top-level manager still receives the reward in \autoref{eq:reward_manager}). Additionally, since in \textit{VMAS Sampling} we employ a $3$-level hierarchy, in this environment we also consider $3)$ \gls{himppo-nl}, where sub-managers do not receive the cumulative reward associated with each worker (refer to~\autoref{eq:reward_submanager}).
The results reported in \autoref{tab:tab_reward_assignment} show that our advantage-based hierarchical reward-assignment scheme is crucial for achieving good results in complex scenarios of \textit{LBFwS}, further confirming what we already observed when comparing our approach against hierarchical baselines. Results in the \textit{VMAS Sampling} environment show that preventing workers from accessing the external signal and employing advantage-like rewards do not deteriorate performance and improve sample efficiency (refer to \autoref{fig:sampling_ablation_reward}). Conversely, preventing sub-managers from using local rewards leads to poor results, as shown by \gls{himppo-nl}.

\def\thickhline{\noalign{\hrule height1.3pt}}

\begin{table}
\caption{Sensitivity to different reward-assignment mechanisms in \textit{LBFwS} and \textit{VMAS Sampling}~(avg. return $\pm$ std. of $8$ and $6$ runs, respectively). Training curves are reported in~\autoref{app:app_reward_mechanism}.}
\label{tab:tab_reward_assignment}
%\vskip 0.15in
\setlength{\tabcolsep}{2.5pt}
\renewcommand{\arraystretch}{1.1}
\centering
\resizebox{0.8\columnwidth}{!}{
\begin{tabular}{c|c|c|c|c|c|c}
\thickhline
\multicolumn{1}{c|}{} & \multicolumn{3}{c|}{\textit{LBFwS}} & \multicolumn{3}{c}{\textit{VMAS Sampling}}\\
%\cline{2-7}
\multicolumn{1}{c|}{}  & \multicolumn{1}{c}{\textit{Easy}}& \multicolumn{1}{c}{\textit{Medium}}& \textit{Hard} & \multicolumn{1}{c}{3 Agents} & \multicolumn{1}{c}{5 Agents}& {7 Agents}\\
\thickhline
\acrshort{himppo} &  $1467\pm \scriptstyle{183}$ & $\boldsymbol{1221}\pm \scriptstyle{138}$& $\boldsymbol{1168}\pm \scriptstyle{155}$& $66\pm \scriptstyle{4}$& $\boldsymbol{102}\pm \scriptstyle{9}$& $\boldsymbol{121} \pm \scriptstyle{16}$ \\
%\hline
 \acrshort{himppo-fl}&  $\boldsymbol{1682}\pm \scriptstyle{290}$& $768\pm \scriptstyle{435}$ & $381\pm \scriptstyle{12}$ &$68\pm \scriptstyle{4}$ & $91\pm \scriptstyle{9}$& $\boldsymbol{117}\pm \scriptstyle{12}$ \\
%\hline
\acrshort{himppo-er} & $565\pm \scriptstyle{217}$ & $446\pm \scriptstyle{9}$& $375\pm \scriptstyle{14}$& {$\boldsymbol{73}\pm \scriptstyle{4}$}& {$\boldsymbol{103}\pm \scriptstyle{7}$}& $\boldsymbol{117}\pm \scriptstyle{14}$\\
%\hline
\acrshort{himppo-nl} & $\diagup$ & $\diagup$& $\diagup$& {$33\pm \scriptstyle{6}$}& {$52\pm \scriptstyle{4}$}& $67\pm \scriptstyle{7}$\\
\hline
\end{tabular}
}
\end{table}

\def\thickhline{\noalign{\hrule height1.3pt}}

\begin{wraptable}[11]{R}{0.5\textwidth}
\vspace{-0.3cm}
\caption{Ablation study for different hierarchical graphs in the \textit{VMAS Sampling} environment~(avg. return $\pm$ std. of $6$ runs). Training curves are reported in~\autoref{app:app_hier_structure}. %Bold numbers indicate a better final return. Full training curves are reported in~\autoref{app:app_hier_structure}.
}
\vspace{0.2cm}
\label{tab:tab_hierarchy}
%\vskip 0.15in
\renewcommand{\arraystretch}{1.1}
\centering
\resizebox{0.5\columnwidth}{!}{
\begin{tabular}{c|c|c|c}
\thickhline
\multicolumn{1}{c|}{}  & $3$ Agents& $5$ Agents& $7$ Agents \\
\thickhline
\acrshort{himppo} &  $\boldsymbol{66}\pm \scriptstyle{4}$ & $\boldsymbol{102}\pm \scriptstyle{9}$& $\boldsymbol{121}\pm \scriptstyle{16}$\\
 \acrshort{himppo-sg}&  ${51}\pm \scriptstyle{8}$ & ${80}\pm \scriptstyle{9}$& ${82}\pm \scriptstyle{9}$ \\
\acrshort{himppo-2l} &  $\boldsymbol{59}\pm \scriptstyle{9}$ & ${78}\pm \scriptstyle{14}$& ${96}\pm \scriptstyle{7}$\\
\hline
\end{tabular}
}
\end{wraptable}
\paragraph{Hierarchical Structure} We consider different hierarchical graph structures to investigate their impact on decision making. In particular, we compare the state-dependent dynamic hierarchical graph $\hiergraph_t$ (refer to \autoref{sec:build_hierarchy} and \autoref{app:sampling_hier_graph}) against: $1)$ \gls{himppo-sg}, a $3$-level \gls{himppo} where the topology is fixed at the beginning of each episode, i.e., the $\hiergraph_t = \hiergraph_0\ \forall\ t$; $2)$ \gls{himppo-2l}, where we use a static $2$-level hierarchy with all the workers connected to a single top-level manager.
The results reported in \autoref{tab:tab_hierarchy} show that the dynamic $3$-level hierarchy is crucial for achieving better performance in \textit{VMAS Sampling}. Indeed, \gls{himppo-sg} and \gls{himppo-2l} are limited compared to \gls{himppo} since the hierarchical structure does not reflect the current state. Notably, \gls{himppo-sg} and \gls{himppo-2l} perform akin. This suggests that, in this environment, processing information with a deeper (but static) hierarchy does not provide additional advantages when compared to a $2$-level structure.

\section{Conclusions}~\label{sec:conclusion}
We introduced~\acrfull{method}, a novel methodology for learning hierarchies of message-passing policies in multi-agent systems. The decision-making structure of \gls{method} relies on a (dynamic) multi-level hierarchical graph, which allows for improved coordination and planning by leveraging communication and hierarchical decision-making. Notably, the proposed reward-assignment method is theoretically sound, and empirical results on different benchmarks validate its effectiveness. \gls{method} opens up new designs for graph-based hierarchical \gls{marl} policies.

\paragraph{Limitations and future work}
The current implementation of \gls{method} uses a goal-assignment mechanism where upper levels operate at fixed time scales.  
Future work can build upon \gls{method} to explore more advanced goal-assignment schemes. In particular, it would be interesting to develop asynchronous mechanisms, where each (sub-)manager can send commands at different time scales based on the current state. 
Finally, extensions to heterogeneous multi-agent systems would broaden the applicability of \gls{method}. 

\section{Acknowledgements}
This work was supported by the Swiss National Science Foundation grants No.~204061~(\emph{HORD GNN: Higher-Order Relations and Dynamics in Graph Neural Networks}) and No.~225351~(\emph{Relational Deep Learning for Reliable Time Series Forecasting at Scale}).

\clearpage

\bibliography{content}
\bibliographystyle{plainnat}

\appendix

\newpage
\appendix
\section*{Appendix}

\section{\gls{method}}~\label{app:algorithm}
\begin{algorithm}[h!]
\caption{\gls{method} Decision-Making Process (single step)}
\label{alg:alg_episode}
%\begin{multicols}{2}
\begin{algorithmic}[1]
\Require Observations $\{\vz_t^w\}_{w\in\gN}$; graph $\hiergraph_t$.
\Ensure Set of actions $\{\va_t^w\}_{w\in\gN}$.
\vspace{0.25em}

\Procedure{MessagePassing}{$\vh^{\cdot, 0}, \hiergraph$}
    \For{$l \in \{1, \dots, L_r\}$}
        \State $\vh^{\cdot, l} \gets \text{MP}(\vh^{\cdot, l-1}, \hiergraph)$ \Comment{Using~\autoref{eq:mp}}
    \EndFor
    \State \Return $\vh^{\cdot, L_r}$
\EndProcedure
\vspace{0.25em}
    \Statex \# {\hlcyan{\textbf{Phase 1: Bottom-up Representation Propagation}}}
    \Statex \#\# {\hlcyanlight{\textbf{Phase 1a:} Workers}}
    \ForAll{worker $w \in \mathcal{N}$} \Comment{in parallel}
        \State $\vh_t^{w,0} \gets \text{MLP}_w(\vz_t^w)$
        \State $\vh_t^{w,L_r} \gets \Call{MessagePassing}{\vh_t^{w,0}, \hiergraph_t}$
    \EndFor
    \Statex \#\# {\hlcyanlight{\textbf{Phase 1b:} Sub-managers (every $\alpha$ steps)}}
    \If{$t \bmod \alpha = 0$}
        \ForAll{sub-manager $s\in\gN_s$} \Comment{in parallel}
            \State $\vh_t^{s,0} \gets \underset{w \in \mathcal{C}_t^s}{\aggr}\{\text{MLP}_s(\vh_t^{w,0})\}$
            \State $\vh_t^{s,L_r} \gets \Call{MessagePassing}{\vh_t^{s,0}, \hiergraph_t}$
        \EndFor
    \EndIf

    \Statex \#\# {\hlcyanlight{\textbf{Phase 1c:} Manager (every $K\alpha$ steps)}}
    \If{$t \bmod K\alpha = 0$}
        \State $\vh_t^{m} \gets \underset{s}{\aggr}\{\text{MLP}_m(\vh_t^{s,0})\}$
    \EndIf
    %\vspace{0.35em}

\vspace{0.25em}
   \Statex \# \hlyellow{\textbf{Phase 2: Top-down Goal Generation}}
    \If{$t \bmod K\alpha = 0$} \Comment{Manager $\to$ sub-managers}
        \ForAll{sub-manager $s\in\gN_s$} \Comment{in parallel}
            \State $\vg_t^{m{\scriptscriptstyle\to} s} \gets \pi_\mu(\vh_t^{m} \mathbin\| \vh_t^{s,0} \mathbin\| \vh_t^{s,L_r})$
        \EndFor
    \Else
        \State $\vg_t^{m{\scriptscriptstyle\to} s} \gets \vg_{t-1}^{m{\scriptscriptstyle\to} s}$ for all $s\in\gN_s$
    \EndIf
    
    \If{$t \bmod \alpha = 0$} \Comment{Sub-managers $\to$ workers}
        \ForAll{worker $w$ (under sub-manager $s$)} \Comment{in parallel}
             \State $\vg_t^{s{\scriptscriptstyle\to} w} \gets \pi_\sigma(\vg_t^{m{\scriptscriptstyle\to} s} \mathbin\| \vh_t^{s,L_r} \mathbin\| \vh_t^{w,0} \mathbin\| \vh_t^{w,L_r})$
        \EndFor
    \Else
        \State $\vg_t^{s{\scriptscriptstyle\to} w} \gets \vg_{t-1}^{s{\scriptscriptstyle\to} w}$ for all $w\in\gN$
    \EndIf
\vspace{0.25em}

    \Statex \# \hlred{\textbf{Phase 3: Worker Action Selection}}
    \ForAll{worker $w \in \mathcal{N}$} \Comment{in parallel}
        \State $\va_t^w \gets \pi_\omega(\vg_t^{s{\scriptscriptstyle\to} w} \mathbin\| \vh_t^{w,0} \mathbin\| \vh_t^{w,L_r})$
    \EndFor
    %\vspace{0.35em}

    \State \Return actions $\{\va_t^w\}_{w\in\gN}$.
\end{algorithmic}
%\end{multicols}
\end{algorithm}

\clearpage

\section{Additional Proofs}\label{app:proofs}

The topology of the hierarchical graph $\hiergraph_t$ can change in time. In the execution phase, received states are aggregated according to the current topology, making nodes aware of the hierarchical decision-making structure. Nevertheless, long-term returns do not provide information concerning possible changes in $\hiergraph_t$. As such, during the training phase, nodes cannot perceive changes in the topology using the sole learning signal. For example, consider the top-level manager $m$: at each step $t\in T_\mu $, goals $\vg_{t}^{m{\scriptscriptstyle\to} s}$ are sent according to $\hiergraph_t$, which accounts for differences in the partitions of the underneath nodes. However, in the optimization phase, such differences do not emerge when considering the long-term return. To address this problem, during training, we assume a static hierarchy given by the topology at the first step. In other words, each trajectory is evaluated using the hierarchical graph $\hiergraph_0$ at the first step $t=0$. 

\paragraph{Homogeneity and weight-sharing policies} As reported in~\autoref{sec:preliminaries}, we consider typical homogeneous settings~\cite{terry2020revisiting,marl-book}, i.e., multi-agent systems where the agents' individual dynamics are identical. This implies that the observation spaces, action spaces, and the reward function are the same. Consequently, they will also have the same expected return. For this reason, we learn a single weight-sharing policy $\pi_\omega$ that, for each agent, generates local~(individual) actions based on local observations. Similarly, sub-managers are also homogeneous, and we denote the corresponding weight-sharing policy as $\pi_\sigma$. Therefore, the theoretical results can be derived for a generic worker $w$ and sub-manager $s$. Note that the top-level manager $m$ is a single agent at the highest level of the hierarchy, and it has no neighboring agents. 

\subsection{Proof of \autoref{thm:manager_optimization}}~\label{sec:proof_manager}
\begin{proof}
Following the Markovian assumption, the probability of having a subset $\mathcal{C}_t^s$ of workers subordinate to $s$ at time $t$ depends only on the global state $\textbf{S}_t$ (refer to~\autoref{sec:build_hierarchy} and~\autoref{app:sampling_hier_graph}). 
The top-level manager takes actions according to the current topology $\hiergraph_t$. Therefore, long-term returns achieved by sending a goal $\vg_{t}^{m{\scriptscriptstyle\to} s}$ must be evaluated by considering the rewards associated with the subset of workers $\mathcal{C}_s^t$, i.e., the objective of the manager is defined by aggregating rewards w.r.t.\ the initial partition of the workers. This is a reasonable assumption since the scalar learning signal~(return) received by the manager does not provide any information concerning possible changes in the topology in the subsequent steps: the manager learns to provide optimal actions according to the current topology $\hiergraph_t$, and returns must be coherently defined under this setting. 
Given the joint policy $\pi \doteq (\pi_\omega, \pi_\sigma, \pi_\mu)$, the manager's objective reads:
\begin{multline}
    \eta_m(\pi_\mu) =  \underset{\textbf{S}_0}{\mathbb{E}}\underset{\vo_0^{m{\scriptscriptstyle\to} s}}{\mathbb{E}}\Bigl\lbrack V^{\pi_\mu}(\vo_0^{m{\scriptscriptstyle\to} s}) \Bigr\rbrack
     = \underset{\tau_m\sim \pi}{\mathbb{E}}\Biggl\lbrack \sum_{{t}\in T_\mu}\gamma^{t/K\alpha} r_{t}^{m{\scriptscriptstyle\to} s} \Biggr\rbrack =  \\ 
    =\underset{\textbf{S}_0}{\mathbb{E}} \underset{\mathcal{C}_0^s}{\mathbb{E}}\underset{\tau_m\sim \pi}{\mathbb{E}}\Biggl\{ \sum_{{t}\in T_\mu}\gamma^{t/K\alpha} \underset{w\in \mathcal{C}^{s}_0}{\meanset} \Biggl\lbrack\sum_{k=0}^{K\alpha-1}\bar{r}_{t+k}^{w}\Biggr\rbrack\Biggr\} =  \\
    =
    \underset{\textbf{S}_0}{\mathbb{E}} \underset{\mathcal{C}_0^s}{\mathbb{E}}\Biggl\{\underset{w\in \mathcal{C}^{s}_0}{\meanset}\Biggl\lbrack \underset{\tau_m\sim \pi}{\mathbb{E}}\biggl\lbrack \sum_{{t}\in T_\mu}\gamma^{t/K\alpha} \sum_{k=0}^{K\alpha-1}\bar{r}_{t+k}^{w}\biggr\rbrack \Biggr\rbrack \Biggr\} =\\ 
     = \underset{\textbf{S}_0}{\mathbb{E}} \underset{\mathcal{C}_0^s}{\mathbb{E}} \Biggl\lbrack \underset{w\in \mathcal{C}^{s}_0}{\meanset}\Bigl\{\eta(\pi|\textbf{S}_0)\Bigr\}\Biggr\rbrack =
     \underset{\textbf{S}_0}{\mathbb{E}} \Biggl\lbrack \eta(\pi|\textbf{S}_0)\Biggr\rbrack =
     \eta(\pi)
\end{multline}
In the second-to-last equality, we used the homogeneity of the agents~\cite{marl-book}, which implies that the objective does not depend on any specific worker $w$ as all workers will undergo the same dynamics and have the same expected return~(see also \autoref{sec:preliminaries}). Therefore, the manager's objective is aligned with the optimization goal of the joint policy~(refer to \autoref{eq:joint_return}). This proves \autoref{thm:manager_optimization}.
\end{proof}

\subsection{Intermediate Results}

In order to provide guarantees of alignment for the lower levels, we now report two intermediate results.
To show that low-level returns are aligned with the global objective, let $\tilde{\pi}$ and $\tilde{\pi}_{i=\omega,\sigma,\mu}$ denote the old versions of $\pi$ and $\pi_i$, respectively. Consider the following auxiliary term:
\begin{equation}
    \eta_{adv}({\pi}) \doteq \underset{\tau_m\sim{\pi}}{\mathbb{E}}\Biggl[\sum_{t\in T_\mu}\gamma^{t/K\alpha}  A^{\tilde{\pi}_\mu} (\vo_{t}^{m{\scriptscriptstyle\to} s}, \vg_{t}^{m{\scriptscriptstyle\to} s})\Biggr]
\end{equation}

\begin{lemma}
\label{lemma:adv_over_old_policy}
The objective $\eta({{\pi}})$ can be written in terms of the advantage over the old version $\tilde{\pi}$ of the joint policy as:
\begin{equation}
\label{eq:lemma}
    \eta({\pi}) =  \eta({\tilde{\pi}}) + \eta_{adv}({\pi})
\end{equation}
\end{lemma}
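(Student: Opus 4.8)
\textbf{Proof proposal for Lemma~\ref{lemma:adv_over_old_policy}.}

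The plan is to recognize Eq.~\eqref{eq:lemma} as a manager-level instance of the standard policy-improvement identity (the ``performance difference lemma'' of Kakade--Langford), adapted to the latent \gls{pomdp} in which the top-level manager operates and then related back to $\eta$ via the structural coefficient $k_m$ established in the proof of \autoref{thm:manager_optimization}. First I would work entirely at the manager's time scale $T_m$, treating the manager's trajectory $\tau_m$ as a Markov process over latent observations $\vo_t^{m\to s}$ with per-step reward $r_t^{m\to s}$ and discount $\gamma^{1/K\alpha}$ per manager step. For this chain, the classical identity gives
\begin{equation*}
    \eta_m(\pi_m) = \eta_m(\tilde\pi_m) + \underset{\tau_m\sim\pi}{\mathbb{E}}\Biggl[\sum_{t\in T_m}\gamma^{t/K\alpha} A^{\tilde\pi_m}(\vo_t^{m\to s}, \vg_t^{m\to s})\Biggr] = \eta_m(\tilde\pi_m) + \eta_{adv}(\pi),
\end{equation*}
where the expectation is over trajectories generated by the \emph{new} manager policy (and the fixed lower levels), which is exactly the definition of $\eta_{adv}(\pi)$ given just above the lemma.

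Next I would invoke \autoref{thm:manager_optimization}, or rather its underlying computation, which showed $\eta_m(\pi_m) = k_m\,\eta(\pi)$ with $k_m>0$ the average branching factor of a sub-manager and the identity holding for \emph{any} manager policy (the derivation only used the Markov property, the definition of $r_t^{m\to s}$ in Eq.~\eqref{eq:reward_manager}, and agent homogeneity — none of which depend on optimality). Applying this with $\pi$ and with $\tilde\pi$ gives $\eta_m(\pi_m) = k_m\eta(\pi)$ and $\eta_m(\tilde\pi_m) = k_m\eta(\tilde\pi)$. Substituting into the performance-difference identity and dividing through by $k_m$ yields
\begin{equation*}
    \eta(\pi) = \eta(\tilde\pi) + \frac{1}{k_m}\,\eta_{adv}(\pi),
\end{equation*}
which is precisely Eq.~\eqref{eq:lemma}.

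The main obstacle I anticipate is not the algebra but making the first step rigorous: one must check that the manager's decision process genuinely behaves as a well-defined (latent) MDP/POMDP so that the performance-difference lemma applies verbatim — in particular that the ``static hierarchy $\hiergraph_0$'' convention introduced at the top of \autoref{app:proofs} makes the partition $\mathcal{C}_0^s$ (hence the reward $r_t^{m\to s}$ and the transition kernel over $\vo_t^{m\to s}$) a fixed function of the rolled-out state, so that $A^{\tilde\pi_m}$ is well-defined and the telescoping argument behind the identity goes through. I would state this as the reason the expectation in $\eta_{adv}$ is taken under $\pi$ while the advantage is measured under $\tilde\pi_m$, and note that the interchange of the finite $\meanset$ over $w\in\mathcal{C}_0^s$ with the expectation (already used in \autoref{thm:manager_optimization}) is what lets the same $k_m$ factor out on both sides. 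Everything else is a direct substitution.
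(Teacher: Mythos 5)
Your proposal is correct and follows essentially the same route as the paper: the paper's proof is exactly the performance-difference (Kakade--Langford) telescoping argument, expanding $A^{\tilde\pi_m}$ as $r_t^{m\to s} + \gamma V^{\tilde\pi_m}(\vo_{t+K\alpha}^{m\to s}) - V^{\tilde\pi_m}(\vo_t^{m\to s})$ and collapsing the sum to $\eta_m(\pi_m) - \eta_m(\tilde\pi_m)$, followed by the substitution $\eta_m(\cdot) = k_m\,\eta(\cdot)$ from \autoref{thm:manager_optimization}. Your observation that the $k_m$ identity must hold for the old policy as well, and your caveat about the static-hierarchy convention making the manager's latent decision process well-posed, match the assumptions the paper relies on implicitly.
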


\begin{proof}
We can write the expected return $\eta(\pi)$ of the policy $\pi$ in terms of its expected advantage over the old version $\tilde{\pi}$:
{\allowdisplaybreaks
    \begin{multline}
     \eta_{adv}({\pi}) =
     \underset{\tau_m\sim{\pi}}{\mathbb{E}}\Biggl[\sum_{t\in T_\mu}\gamma^{t/K\alpha}  A^{\tilde{\pi}_\mu} (\vo_{t}^{m{\scriptscriptstyle\to} s}, \vg_{t}^{m{\scriptscriptstyle\to} s})\Biggr] =\\
   = \underset{\tau_m\sim{\pi}}{\mathbb{E}}\Biggl[\sum_{t\in T_\mu}\gamma^{t/K\alpha} \bigl[r_{t}^{m{\scriptscriptstyle\to} s} + \gamma V^{\tilde\pi_\mu} (\vo_{t+K\alpha}^{m{\scriptscriptstyle\to} s}) - V^{\tilde\pi_\mu} (\vo_{t}^{m{\scriptscriptstyle\to} s}) \bigl]\Biggr] =\\
   =\underset{\tau_m\sim{\pi}}{\mathbb{E}}\Biggl[\sum_{t\in T_\mu}\gamma^{t/K\alpha}r_{t}^{m{\scriptscriptstyle\to} s} -  V^{\tilde\pi_\mu} (\vo_{0}^{m{\scriptscriptstyle\to} s})\Biggr]
  =  \underset{\tau_m\sim{\pi}}{\mathbb{E}}\sum_{t\in T_\mu}\gamma^{t/K\alpha} r_{t}^{m{\scriptscriptstyle\to} s}-  \underset{\textbf{S}_{0}}{\mathbb{E}} \underset{\vo_{0}^{m{\scriptscriptstyle\to} s}}{\mathbb{E}} V^{\tilde\pi_\mu} (\vo_{0}^{m{\scriptscriptstyle\to} s})  = \\
  = \Bigl[\eta_m({\pi}_\mu) -  \eta_m(\tilde{\pi}_\mu) \Bigr]
  = \Bigl[\eta({\pi}) - \eta(\tilde{\pi})\Bigr]
    \end{multline}}
where in the last equality we used \autoref{thm:manager_optimization}. Rearranging the terms, we obtain the equality in \autoref{eq:lemma}.
\end{proof}

\begin{corollary}
    \label{corollary:optimization}
    Given Lemma~\ref{lemma:adv_over_old_policy}, maximizing the global objective $\eta(\pi)$ can be written as:
    \begin{equation}
    \label{eq:optimization_corollary}
        \max_{{\pi}}\eta({\pi}) =  \max_{{\pi}}\eta_{adv}({\pi})
    \end{equation}
\end{corollary}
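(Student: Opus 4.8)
The plan is to derive the equivalence in \autoref{eq:optimization_corollary} directly from \autoref{lemma:adv_over_old_policy}, exploiting the fact that the optimization variable $\pi$ only appears through the sampling distribution of $\eta_{adv}(\pi)$, and that the other two terms in the lemma are either constant in $\pi$ or the same object as the left-hand side up to a positive rescaling. First I would rewrite the equality from Lemma~\ref{lemma:adv_over_old_policy} as $\eta(\pi) - \eta(\tilde\pi) = \frac{1}{k_m}\eta_{adv}(\pi)$ and observe that $\eta(\tilde\pi)$ is a fixed scalar: it depends only on the old joint policy $\tilde\pi$, which is held constant while we optimize over $\pi$. Likewise, the advantage $A^{\tilde\pi_m}$ inside $\eta_{adv}(\pi)$ is computed with respect to the \emph{old} policy, so the only dependence of $\eta_{adv}(\pi)$ on $\pi$ is through the trajectory distribution $\tau_m\sim\pi$ over which the expectation is taken — exactly the object that a policy-gradient step moves.

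Next I would invoke the positivity of $k_m$, established in the proof of \autoref{thm:manager_optimization} as the average number of subordinate nodes per sub-manager, hence $k_m > 0$. Since adding a constant ($\eta(\tilde\pi)$) and multiplying by a positive constant ($1/k_m$) are both strictly monotone transformations of a real-valued objective, the $\argmax$ over $\pi$ is unchanged: $\max_\pi \eta(\pi) = \max_\pi\bigl[\eta(\tilde\pi) + \tfrac{1}{k_m}\eta_{adv}(\pi)\bigr]$, and the maximizer of the right-hand side coincides with the maximizer of $\eta_{adv}(\pi)$ alone. This yields $\max_\pi \eta(\pi) = \max_\pi \eta_{adv}(\pi)$ as an identity of optimal values (with the same set of optimizers), which is precisely the claim of \autoref{corollary:optimization}. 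I would phrase the conclusion carefully so it is clear the equality is between optimal values and the argmax sets agree, since that is what is used downstream.

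The argument is essentially a one-line consequence of the lemma, so there is no serious technical obstacle; the only thing to be careful about is \textbf{bookkeeping of what is held fixed}. Specifically, one must be explicit that $\tilde\pi$ (and therefore $\eta(\tilde\pi)$ and the baseline $V^{\tilde\pi_m}$, $A^{\tilde\pi_m}$) are treated as constants during this optimization step — this is the standard trust-region / PPO-style surrogate setup, consistent with how $\hat\pi_w$ and $\hat\pi_s$ are introduced earlier in \autoref{sec:theory}. A secondary point worth a sentence is that the maximization is taken over the same feasible policy class on both sides, so no spurious gain is introduced by the reformulation. With those caveats stated, the corollary follows by monotonicity of affine transformations with positive slope, and the proof is complete.
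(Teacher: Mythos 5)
Your argument is correct and matches the paper's own proof, which likewise observes that $\eta(\tilde\pi)$ is a constant independent of $\pi$ and that the remaining term is $\eta_{adv}(\pi)$ scaled by the positive factor $1/k_m$, so the maximization is unchanged. Your additional care about $k_m>0$ and about which quantities are held fixed is consistent with what the paper establishes in the proof of \autoref{thm:manager_optimization} and adds nothing that conflicts with the paper's reasoning.
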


\begin{proof}
Since $\eta(\tilde{\pi})$ is independent of the joint policy $\pi$, it is straightforward to see that maximizing $\eta({\pi})$ is independent of $\eta(\tilde{\pi})$ and the optimization of $\pi$ can be expressed as \autoref{eq:optimization_corollary}.
\end{proof}

\subsection{Proof of~\autoref{thm:submanager_optimization}}\label{sec:proof_submanager}
\begin{proof}

As we did for the top-level manager, the return of the sub-manager $s$ sending goal $\vg_t^{s{\scriptscriptstyle\to} w}$ to the worker $w$ is computed by considering the sequence of rewards obtained by $w$. We remark that in the $K\alpha$ steps where the top-level manager is not sending goals, the advantage function of the manager in \autoref{eq:reward_submanager} is fixed for each of the $K$ goals sent by the sub-manager.
Therefore, given the joint policy $\hat{\pi}_\sigma \doteq (\tilde{\pi}_\omega, {\pi}_\sigma, \tilde{\pi}_\mu)$, the objective for the sub-manager $s$ reads:
{\allowdisplaybreaks
\begin{multline}
    \eta_s({{\pi}}_\sigma) = 
     \underset{\textbf{S}_0}{\mathbb{E}}\underset{\vo_0^{s{\scriptscriptstyle\to} w}}{\mathbb{E}}\Bigl\lbrack V^{{\pi}_\sigma}(\vo_0^{s{\scriptscriptstyle\to} w}) \Bigr\rbrack =
    \underset{\tau_s\sim\hat{\pi}_\sigma}{\mathbb{E}}\Biggl\{\sum_{t\in T_\sigma}\gamma_\sigma^{t/\alpha} r^{s{\scriptscriptstyle\to} w}_t\Biggr\}=\\
    =\underset{\tau\sim\hat{\pi}_\sigma}{\mathbb{E}}\Biggl\{\sum_{t\in T_\sigma}\gamma_\sigma^{t/\alpha} \biggr[\frac{1}{K} A^{\tilde{\pi}_\mu} (\vo_{t}^{m{\scriptscriptstyle\to} s}, \vg_{t}^{m{\scriptscriptstyle\to} s}) + \sum_{i = 0}^{{\alpha - 1}}\bar{r}_{t+i}^w\biggl]\Biggr\}  =\\
    =\underset{\tau_m\sim\hat{\pi}_\sigma}{\mathbb{E}}\Biggl\{\sum_{t\in T_\mu}\underset{\tau_s(t)\sim\hat{\pi}_\sigma}{\mathbb{E}}\Biggl[\sum_{k=0}^{K-1}\underset{\tau_w(k)\sim\hat{\pi}_\sigma}{\mathbb{E}}{\gamma_\sigma^{t/\alpha} \gamma_\sigma^{k} }\Biggl(\frac{1}{K}A^{\tilde{\pi}_\mu} (\vo_{t}^{m{\scriptscriptstyle\to} s}, \vg_{t}^{m{\scriptscriptstyle\to} s}) + \sum_{i = 0}^{{\alpha-1}}\bar{r}_{t+k\alpha+i}^w\Biggr)\Biggl]\Biggr\} \simeq\\
   \simeq \underset{\tau_m\sim\hat{\pi}_\sigma}{\mathbb{E}}\Biggl\{\sum_{t\in T_\mu}\frac{\gamma^{t/K\alpha} }{K}\frac{1-\gamma^K}{1-\gamma}A^{\tilde{\pi}_\mu} (\vo_{t}^{m{\scriptscriptstyle\to} s}, \vg_{t}^{m{\scriptscriptstyle\to} s})\Biggr\} + \underset{\tau_m\sim\hat{\pi}_\sigma}{\mathbb{E}}\Biggl[\sum_{t\in T_\mu}\Biggl({\gamma^{t/K\alpha} }\sum_{i = 0}^{{K\alpha-1}}\bar{r}_{t+i}^w\Biggr)\Biggr]=\\
   = \frac{1-\gamma^K}{K(1-\gamma)}\underset{\tau_m\sim\hat{\pi}_\sigma}{\mathbb{E}}\Biggl\{\sum_{t\in T_\mu}{\gamma^{t/K\alpha} }A^{\tilde{\pi}_\mu} (\vo_{t}^{m{\scriptscriptstyle\to} s}, \vg_{t}^{m{\scriptscriptstyle\to} s})\Biggr\} + \eta(\hat{\pi}_\sigma) = \\
   = \left(\frac{1-\gamma^K}{K(1-\gamma)} + {1}\right)\underset{\tau_m\sim\hat{\pi}_\sigma}{\mathbb{E}}\Biggl[\sum_{t\in T_\mu}\gamma^{t/K\alpha}  A^{\tilde{\pi}_\mu} (\vo_{t}^{m{\scriptscriptstyle\to} s}, \vg_{t}^{m{\scriptscriptstyle\to} s})\Biggr] + \eta(\tilde{\pi}) = k_\sigma\eta_{adv}({\hat{\pi}}_\sigma) + \eta(\tilde{\pi})
\end{multline}}
where used Lemma~\ref{lemma:adv_over_old_policy} and the approximation holds when $\gamma_\sigma\simeq\gamma\simeq 1$ and $K$ is not too large.
Following Corollary~\ref{corollary:optimization}, since $k_\sigma > 0 $, maximizing the objective of each sub-manager $\eta_s({\pi}_\sigma)$ implies maximizing the global objective $\eta(\hat{\pi}_\sigma)$. This proves \autoref{thm:submanager_optimization}.

\end{proof}

\subsection{Proof of \autoref{thm:worker_optimization}}~\label{app:proof_worker}
In general, goals received by workers are fixed for $\alpha$ steps. Therefore, the reward (i.e., the advantage function of the sub-manager) in \autoref{eq:reward_worker} is constant in the $\alpha$ steps where the worker acts. We provide separate proofs of \autoref{thm:worker_optimization} for static and dynamic hierarchies to ease the presentation, despite the final result being the same. As before, we introduce the joint policy $\hat{\pi}_\omega\doteq({\pi}_\omega, \tilde{\pi}_\sigma, \tilde{\pi}_\mu) $.

\begin{proof}[Proof - Static Hierarchies]
Let us denote as $s$ the supervisor of the worker $w$. The objective for $w$ reads:
\begin{multline}
     \eta_w({\pi}_\omega) =  
     \underset{\textbf{S}_0}{\mathbb{E}}\underset{\vo_0^{w}}{\mathbb{E}}\Bigl\lbrack V^{{\pi}_\omega}(\vo_0^{ w}) \Bigr\rbrack =
     \underset{\tau_w\sim\hat{\pi}_\omega}{\mathbb{E}}\Biggl\{\sum_{t = 0}^\infty\gamma^t_\omega r^{ w}_t\Biggr\} =
      \underset{\tau\sim\hat{\pi}_\omega}{\mathbb{E}}\Biggl\{\sum_{t = 0}^\infty\gamma^t_\omega \frac{A^{\tilde\pi_\sigma} (\vo_{t}^{{s{\scriptscriptstyle\to} w}}, \vg_{t}^{s{\scriptscriptstyle\to} w})}{{\alpha}}\Biggr\} = \\
      = \frac{1}{\alpha}\underset{\tau\sim\hat{\pi}_\omega}{\mathbb{E}}\Biggl\{\sum_{t\in T_\sigma}\gamma^t_\omega \sum_{i=0}^{\alpha - 1}\gamma_\omega^i\Bigl[r_{t}^{s{\scriptscriptstyle\to} w} + \gamma_\sigma V^{\tilde\pi_\sigma} (\vo_{t+\alpha}^{s{\scriptscriptstyle\to} w}) - V^{\tilde\pi_\sigma} (\vo_{t}^{s{\scriptscriptstyle\to} w}) \Bigr]\Biggr\} \simeq\\
    \simeq\frac{1- \gamma^\alpha}{\alpha(1-\gamma)}\underset{\tau\sim\hat{\pi}_\omega}{\mathbb{E}}\Biggl\{\sum_{t\in T_\sigma}\gamma^{t/\alpha}r_{t}^{s{\scriptscriptstyle\to} w} - V^{\tilde\pi_\sigma} (\vo_{0}^{s{\scriptscriptstyle\to} w})\Biggr\}=\\
=   \frac{1- \gamma^\alpha}{\alpha(1-\gamma)} \Biggl\{ \underset{\tau\sim\hat{\pi}_\omega}{\mathbb{E}}\Biggl[ \sum_{t\in T_\sigma}\gamma^{t/\alpha}r_{t}^{s{\scriptscriptstyle\to} w}\Biggr] - \underset{\textbf{S}_{0}}{\mathbb{E}} \underset{\vo_{0}^{s{\scriptscriptstyle\to} w}}{\mathbb{E}} \Biggl[V^{\tilde\pi_\sigma} (\vo_{0}^{s{\scriptscriptstyle\to} w})\Biggr]\Biggr\} =\\
= \frac{1- \gamma^\alpha}{\alpha(1-\gamma)}  \Bigl[k_\sigma\eta_{adv}(\hat{\pi}_{\omega}) + \eta(\tilde{\pi}) - \eta_s(\tilde{{\pi}}_\sigma)\Bigr]  
= k_\omega\Bigl[k_\sigma\eta_{adv}(\hat{\pi}_{\omega}) + \eta(\tilde{\pi}) - \eta_s(\tilde{{\pi}}_\sigma)\Bigr]  
\end{multline}
where we used \autoref{thm:submanager_optimization} and assumed $\gamma_\omega \simeq \gamma_\sigma \simeq 1$ and $\alpha$ not too large. Since $k_\omega > 0$ and given Corollary~\ref{corollary:optimization}, maximizing the objective $\eta_w({\pi}_\omega)$ of each worker implies maximizing  the global objective $\eta(\hat{\pi}_\omega)$. This proves \autoref{thm:worker_optimization} in the case of a static hierarchy.
\end{proof}

\begin{proof}[Proof - Dynamic Hierarchies]
As we did for (sub-)managers, we evaluate the trajectory by keeping the hierarchy fixed. In other words, we consider the sequence of rewards associated with the sub-manager $s_0$ at the first step $t=0$. Nevertheless, defining the future rewards when using a dynamic hierarchy is an ill-posed problem. Indeed, observations $\vo_{t}^{{s_0}{\scriptscriptstyle\to} w}$ at future steps $t>0$ are not properly defined when the actual supervisor $\delta_{t}$ of $w$ at time $t$ is not $s_0$. However, as reported at the end of \autoref{sec:rewards}, using a $3$-level hierarchy restricts the model's applicability to cooperative problems. We then expect sub-managers to be cooperative too, and we can make them benefit from the value functions of future supervisors $\delta_{t>0}$ of $w$. Therefore, fixed the initial sub-manager $s_0$ and given a future step $t$, we can define the advantage-like reward of $w$ under the supervision of the sub-manager $s_0$ as:
\begin{equation}
\label{eq:dyn_adv_submanager}
r^{w}_t =\frac{1}{\alpha}\Bigl[ r_{t}^{{s_0}{\scriptscriptstyle\to} w} +\gamma_\sigma V^{\pi_\sigma} (\vo_{t+\alpha}^{{\delta_{t+\alpha}}{\scriptscriptstyle\to} w}) \llbracket t +\alpha \leq t^*\rrbracket  - V^{\pi_\sigma} (\vo_{t}^{{\delta_{t}}{\scriptscriptstyle\to} w})\llbracket t \leq t^*\rrbracket  \Bigr]
\end{equation}
where the Iverson brackets $\llbracket\cdot\rrbracket$ allow for truncating the influence of the goal $\vg_0^{s_0{\scriptscriptstyle\to} w}$ to the sequence of future sub-managers after the truncation step $t^*$. 
The objective for $w$ reads:
\begin{multline}
     \eta_w({\pi}_\omega) =  
     \underset{\textbf{S}_0}{\mathbb{E}}\underset{\vo_0^{w}}{\mathbb{E}}\Bigl\lbrack V^{{\pi}_\omega}(\vo_0^{ w}) \Bigr\rbrack =
     \underset{\tau_w\sim\hat{\pi}_\omega}{\mathbb{E}}\Biggl\{\sum_{t = 0}^\infty\gamma^t_\omega r^{ w}_t\Biggr\} =\\
      = \frac{1}{\alpha}\underset{\tau\sim\hat{\pi}_\omega}{\mathbb{E}}\Biggl\{\sum_{t\in T_\sigma}\gamma^t_\omega \sum_{i=0}^{\alpha - 1}\gamma_\omega^i\Bigl[ r_{t}^{{s_0}{\scriptscriptstyle\to} w} +\gamma_\sigma V^{\tilde\pi_\sigma} (\vo_{t+\alpha}^{{\delta_{t+\alpha}}{\scriptscriptstyle\to} w}) \llbracket t +\alpha \leq t^*\rrbracket
      -V^{\tilde\pi_\sigma} (\vo_{t}^{{\delta_{t}}{\scriptscriptstyle\to} w})\llbracket t \leq t^*\rrbracket \Bigr]\Biggr\} \simeq\\
    \simeq\frac{1- \gamma^\alpha}{\alpha(1-\gamma)}\underset{\tau\sim\hat{\pi}_\omega}{\mathbb{E}}\Biggl\{\sum_{t\in T_\sigma}\gamma^{t/\alpha}r_{t}^{s_0{\scriptscriptstyle\to} w} - V^{\tilde\pi_\sigma} (\vo_{0}^{s_0{\scriptscriptstyle\to} w})\Biggr\} =\\
=   \frac{1- \gamma^\alpha}{\alpha(1-\gamma)} \Biggl\{ \underset{\tau\sim\hat{\pi}_\omega}{\mathbb{E}}\Biggl[ \sum_{t\in T_\sigma}\gamma^{t/\alpha}r_{t}^{s_0{\scriptscriptstyle\to} w}\Biggr] - \underset{\textbf{S}_{0}}{\mathbb{E}} \underset{\vo_{0}^{s_0{\scriptscriptstyle\to} w}}{\mathbb{E}} \Biggl[V^{\tilde\pi_\sigma} (\vo_{0}^{s_0{\scriptscriptstyle\to} w})\Biggr]\Biggr\} =\\
= \frac{1- \gamma^\alpha}{\alpha(1-\gamma)}  \Bigl[k_\sigma\eta_{adv}(\hat{\pi}_{\omega}) + \eta(\tilde{\pi}) - \eta_s(\tilde{{\pi}}_\sigma)\Bigr]  
= k_\omega\Bigl[k_\sigma\eta_{adv}(\hat{\pi}_{\omega}) + \eta(\tilde{\pi}) - \eta_s(\tilde{{\pi}}_\sigma)\Bigr]  
\end{multline}
where we used \autoref{thm:submanager_optimization} and assumed $\gamma_\omega \simeq \gamma_\sigma \simeq 1$ and $\alpha$ not too large. Since $k_\omega > 0$ and given Corollary~\ref{corollary:optimization}, maximizing the objective $\eta_w({\pi}_\omega)$ of each worker implies maximizing the global objective $\eta(\hat{{\pi}}_\omega)$. This proves \autoref{thm:worker_optimization} in the case of a dynamic hierarchy.
\end{proof}
Notice that a similar result for dynamic hierarchies can be achieved using an advantage-like reward defined in the time scale of the workers:
\begin{equation}
\label{eq:dyn_adv_worker}
r^{w}_t =\frac{1}{\alpha}\Bigl[ r_{t}^{{s_0}{\scriptscriptstyle\to} w} +\gamma_\sigma V^{\pi_\sigma} (\vo_{t+1}^{{\delta_{t+1}}{\scriptscriptstyle\to} w}) \llbracket t +1 \leq t^*\rrbracket  - V^{\pi_\sigma} (\vo_{t}^{{\delta_{t}}{\scriptscriptstyle\to} w})\llbracket t \leq t^*\rrbracket  \Bigr]
\end{equation}
where:
\begin{equation}
\vo_{t+1}^{{\delta_{t+1}}{\scriptscriptstyle\to} w} = 
    \begin{cases}
        \vo_{t+\alpha}^{{\delta_{t+\alpha}}{\scriptscriptstyle\to} w}\quad &\text{if}\  (t+1)\bmod\alpha = 0 \\
        \vo_{t}^{{\delta_{t}}{\scriptscriptstyle\to} w}\quad &\text{otherwise}
    \end{cases}
\end{equation}
By doing so, workers undergo the possible change of supervisor only at the last of the $\alpha$ steps. 
We conduct an empirical analysis of this aspect in \autoref{app:ablation_truncation}. 

\begin{figure*}[b!]
\centering
\includegraphics[width=\textwidth]{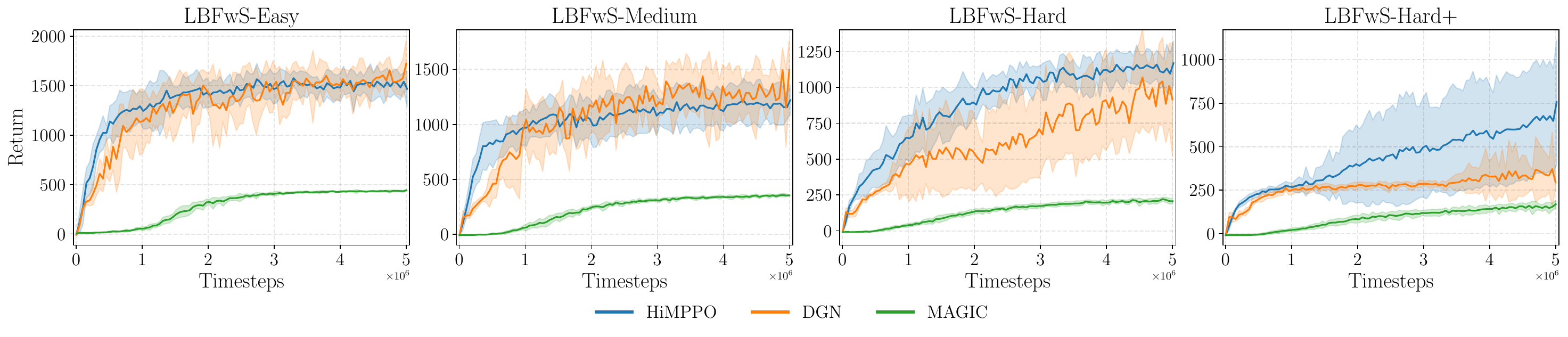}
\vspace{-0.7cm}
\caption{Results of \gls{himppo}, DGN, and MAGIC in the \textit{LBFwS} environment. For each configuration, we report the average return and sample standard deviation of $8$ runs. }
\label{fig:lbf_communication}
\end{figure*}

\section{Additional Results}~\label{app:app_other_experiments}
\subsection{Comparison with Other Communication Methods}~\label{app:app_comm_methods}
In this section, we compare \gls{himppo} against two additional communication methods that are not based on PPO. In particular, we consider $1)$ DGN~\citep{Jiang2020Graph}, which is based on DQN~\cite{mnih2013playing,mnih2015human} and relies on attention-based graph convolutions, and $2)$ MAGIC~\cite{niu2021multi}, a policy gradient method that constructs dynamic agent-agent relationships and then processes the messages using a graph attention network. We compare those models on \textit{LBFwS}, as it requires both coordination and high-level planning to be solved.

Results reported in~\autoref{fig:lbf_communication} show that MAGIC always learns the individual strategy, leading to lower returns. On the contrary, DGN performs comparably with \gls{himppo} in \textit{LBFwS-Easy} and \textit{LBFwS-Medium}, while in \textit{LBFwS-Hard} is still able to learn the cooperative strategy, even if it achieves lower average returns w.r.t. \gls{himppo} and is less sample efficient. Nevertheless, to further test the models' capabilities, we consider the \textit{LBFwS-Hard+} scenario, where the map is an $18\times18$ grid and there are $7$ food items of level $1$ and $7$ food items of level $2$. Here, DGN struggles to learn the cooperative strategy, while \gls{himppo} learns it in the majority of the runs.

\subsection{Analysis of the Communication Mechanism}~\label{app:app_sampling_topology}
To analyze the impact of the proposed coordination and communication mechanism, we compare the results achieved by \gls{himppo} against some variants of GPPO that use a fixed graph topology. In addition to H-GPPO, which assumes an underlying fully-connected graph, we consider a star graph~(GPPO-star), with a central node connected to all the others, a path graph~(GPPO-path), and a cycle graph~(GPPO-cycle). Note that for $3$-node graphs, path and cycle graphs are equivalent to star and fully-connected graphs, respectively. 

Results reported in \autoref{fig:sampling_topology} show that \gls{himppo} performs better than all the variants for larger systems. In particular, using fixed topologies in GPPO often hinders performance, and a fully-connected graph can make optimization unstable. Note that, globally, \gls{himppo} and fixed-topology variants of GPPO have access to the same information, showing that controlling how such information is processed is crucial in learning effective policies.

\begin{figure*}[t]
\centering
\includegraphics[width=\textwidth]{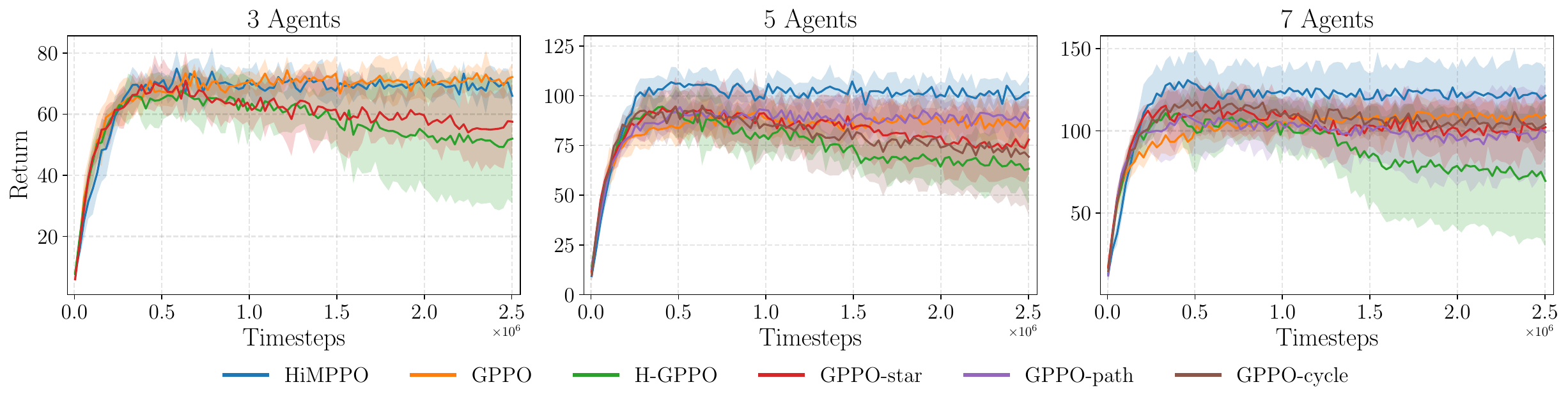}
\vspace{-0.7cm}
\caption{Results of the communication analysis in the \textit{VMAS Sampling} environment, where we compare the performance of \gls{himppo} against \gls{gppo} and its variants with different communication topologies. For each configuration, we report the average return and sample standard deviation of $6$ runs.}
\label{fig:sampling_topology}
\end{figure*}

\subsection{Sensitivity Analysis of the Goal Scale}~\label{app:app_goal_scale} 
The hyperparameters $K$ and $\alpha$ represent the time steps in which (sub-)managers act, i.e., the goal-propagation frequency. In general, these time scales align decision-making to the temporal resolution of the problem, and can be set by performing a grid search. Faster scales should be adopted in tasks requiring quick adaptation to provide more frequent supervision (e.g., robotic control). Conversely, slower scales can help in temporally extended tasks, such as navigation problems. 

To assess how the goal-propagation scale affects the performance of \gls{himppo}, we perform a sensitivity experiment of the manager's goal scale $K$ in \textit{LBFwS}. In particular, in addition to $K=5$, we consider slower ($K=15$) and faster ($K=2$) scales. The results in \autoref{fig:sensitivity_analysis_goal} show that, on average, our model is robust to the choice of $K$. Nevertheless, using a faster scale can effectively improve the performance in simpler tasks by providing more frequent commands, but can be detrimental in scenarios that require a coarser resolution. Indeed, because of the map size in \textit{LBFwS-Hard}, more steps are needed to find and deliver items in this scenario.

\begin{figure*}[t]
\centering
\includegraphics[width=1.0\textwidth]{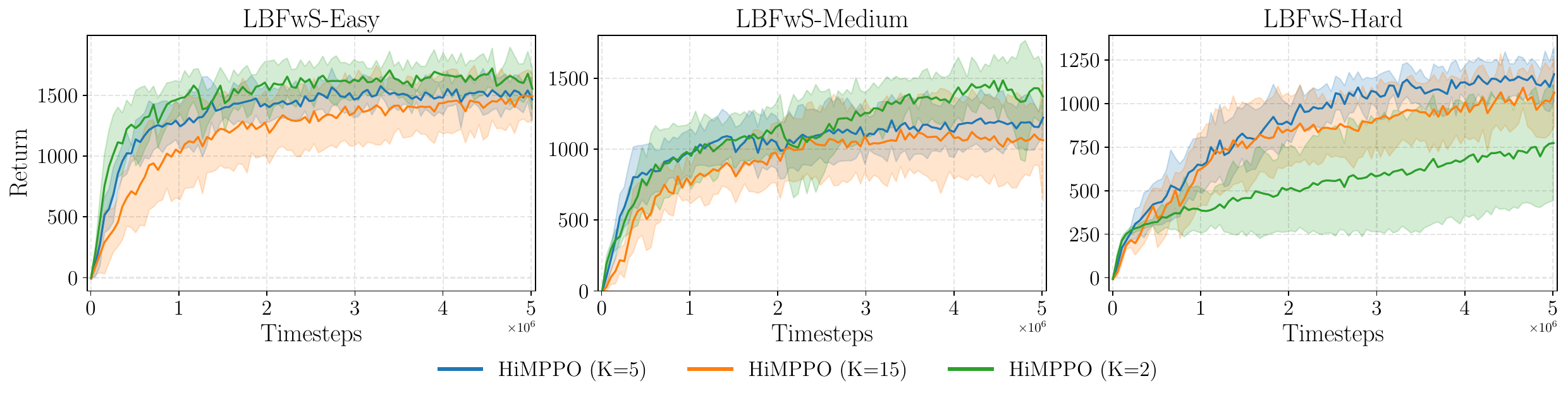}
\vspace{-0.7cm}
\caption{Results of the sensitivity analysis on the goal scale $K$ in the \textit{LBFwS} environment. For each configuration, we report the average return and sample standard deviation of $8$ runs.}
\label{fig:sensitivity_analysis_goal}
\end{figure*}

\begin{figure*}[t]
\centering
\includegraphics[width=1.0\textwidth]{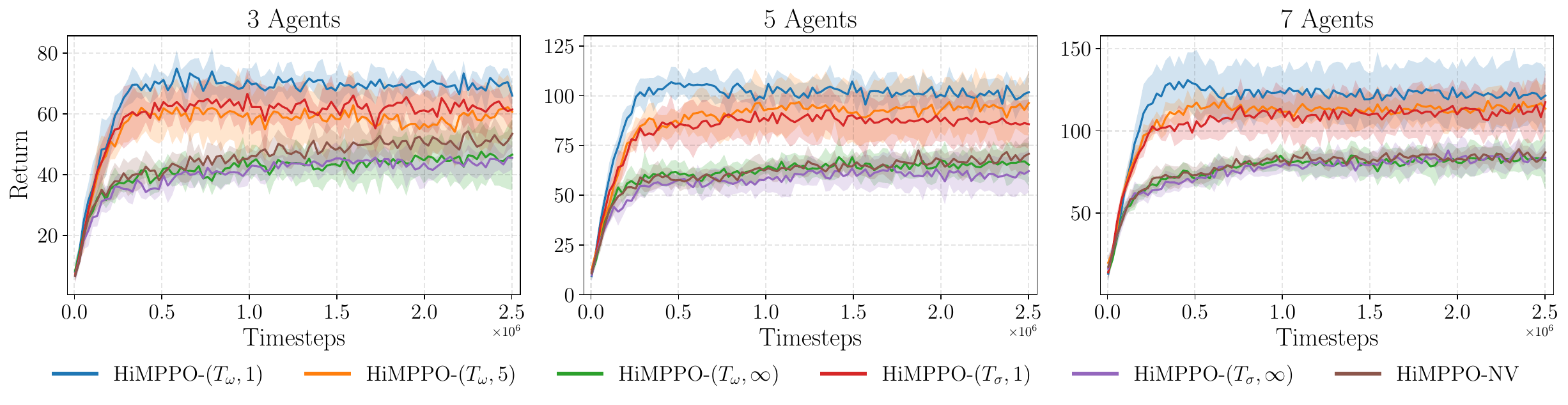}
\vspace{-0.7cm}
\caption{Results of the ablation study for different truncation schemes in the \textit{VMAS Sampling} environment. For each configuration, we report the average return and sample standard deviation of $6$ runs.}
\label{fig:ablation_truncation}
\end{figure*}

\subsection{Analysis of the Truncation Value}~\label{app:ablation_truncation} 

As reported in \autoref{app:proof_worker}, dynamic $3$-level hierarchies require defining future value functions when considering the sequence of rewards associated with the sub-manager at the first step. In particular, according to the desired temporal resolution, they can be defined w.r.t.\ the time scale of sub-managers~(\autoref{eq:dyn_adv_submanager}) or workers (\autoref{eq:dyn_adv_worker}).
Moreover, future values $V^{\pi_\sigma}(\vo_{t}^{{\delta_{t}}{\scriptscriptstyle\to} w})$ can be set to zero after a certain truncation step $t^*$. This threshold accounts for the degree of cooperation of the sub-managers: low values of $t^*$ imply that the sub-manager sending the goal relies only on first estimates of the value functions, and vice versa. 

To investigate these aspects, we consider both the value-assignment methods as well as different truncation values. We denote as \gls{himppo}-$(T_\sigma, t^*)$ and \gls{himppo}-$(T_\omega, t^*)$ the models where advantage-like rewards are defined using \autoref{eq:dyn_adv_submanager} and~\ref{eq:dyn_adv_worker}, respectively, with truncation step $t^*$. Furthermore, No Values \gls{himppo} (\gls{himppo}-NV) denotes the advantage-like rewards without value functions, i.e., with truncation step $t^*=0$. The results reported in \autoref{fig:ablation_truncation} show that for low $t^*$ the model performs better. However, completely truncating the value functions from the workers' reward~(\gls{himppo}-NV) leads to lower returns.
A possible explanation could be that the sub-managers' value functions are crucial for correctly guiding the workers toward the global objective. Nevertheless, high values of $t^*$ make credit assignment more challenging, as it considers a longer sequence of value functions, which might be related to other sub-managers.

\subsection{SMACv2}~\label{app:app_smac}

We consider $3$ representative maps from \textit{SMACv2}, namely \textit{Protoss}, \textit{Terran}, and \textit{Zerg}. Since hyperparameters were not tuned for this environment, we adopted the simplified $6\_vs\_5$ scenario for all the maps. For the models leveraging graphs, we defined two allies~(nodes) as connected if their distance is less than a pre-specified communication range, fixed to $3$; dead allies are filtered out from the graph.
For \gls{himppo}, the hierarchical scheme is a $2$-level hierarchy where goals are propagated every $5$ steps.

The results reported in \autoref{fig:smac_results} show that graph-based representations do not improve performance in this environment: allies' features are already encoded in the observation space of each agent, making IPPO and MAPPO the best-performing models. Indeed, independent learning can achieve remarkable performance in similar settings~\citep{de2020independent}, suggesting that additional information processing mechanisms might be redundant for solving these tasks. Nevertheless, the $2$-level hierarchical coordination scheme allows \gls{himppo} to achieve the best performance among graph-based methods.
Since \textit{SMACv2} is a fully cooperative environment, agents receive the same reward signal at each step. 
This makes it even more challenging for our model to learn an effective hierarchical coordination mechanism, as goals received by the workers at the same step are associated with the same long-term return. 
Therefore, the manager has to learn to generate individual goals based on an implicit estimate of the local contributions of each worker. Nevertheless, despite the complex reward-assignment mechanism, \gls{himppo} achieves a competitive win rate percentage.

\begin{figure*}[h!]
\centering
\includegraphics[width=\textwidth]{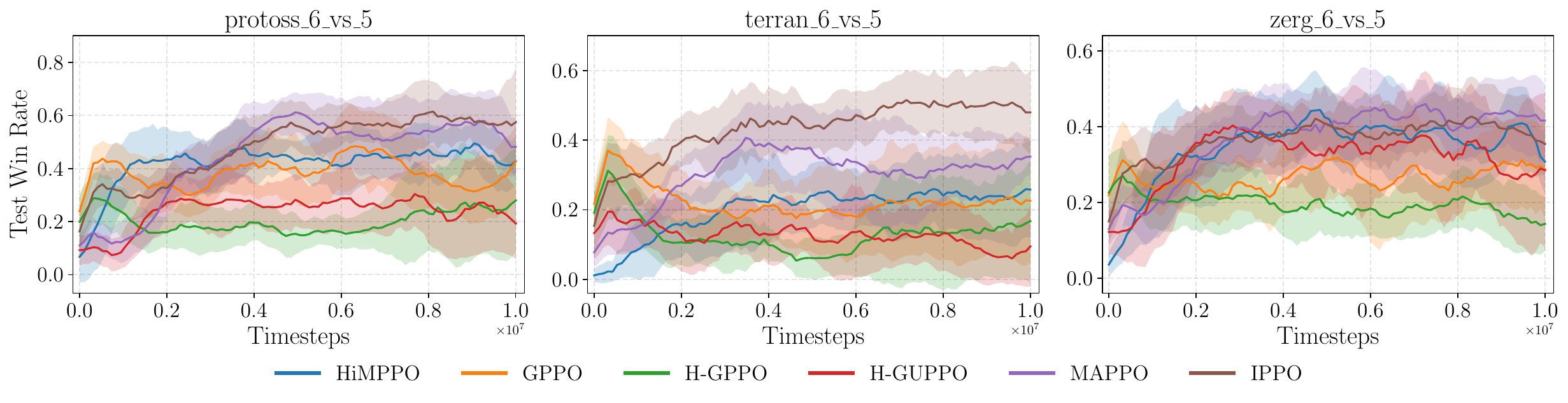}
\vspace{-0.7cm}
\caption{Results of the models for three maps of the \textit{SMACv2} environment. We report the average test win rate and sample standard deviation of $4$ runs for each configuration.}
\label{fig:smac_results}
\end{figure*}

\subsection{Analysis of the Reward Mechanism}~\label{app:app_reward_mechanism}

\begin{figure*}[h!]
\centering
\includegraphics[width=1.0\textwidth]{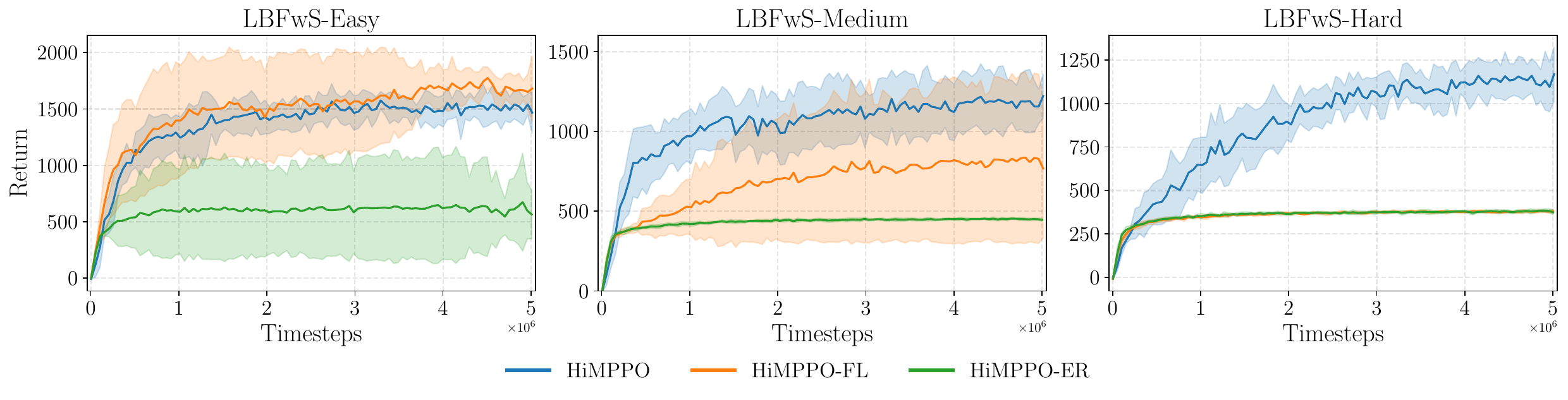}
\vspace{-0.7cm}
\caption{Results of the ablation study for different reward-assignment mechanisms in the \textit{LBFwS} environment. For each configuration, we report the average return and sample standard deviation of $8$ runs.}
\label{fig:lbf_ablation_reward}
\end{figure*}

\begin{figure*}[h!]
\centering
\includegraphics[width=1.0\textwidth]{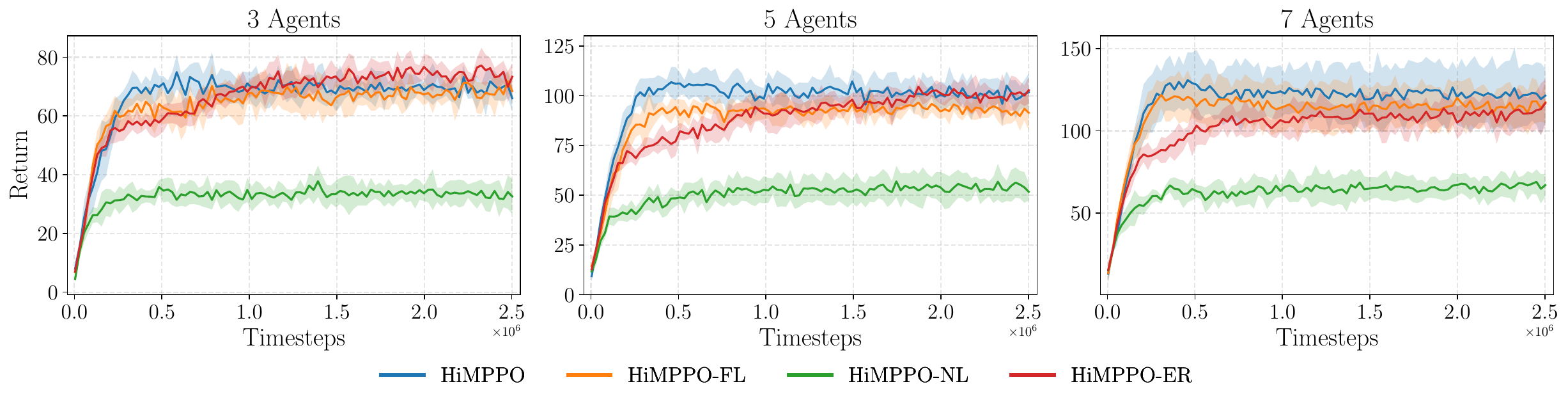}
\vspace{-0.7cm}
\caption{Results of the ablation study for different reward-assignment mechanisms in the \textit{VMAS Sampling} environment. For each configuration, we report the average return and sample standard deviation of $6$ runs.}
\label{fig:sampling_ablation_reward}
\end{figure*}

\clearpage
\subsection{Analysis of the Hierarchical Structure}~\label{app:app_hier_structure}

\begin{figure*}[h!]
\centering
\includegraphics[width=1.0\textwidth]{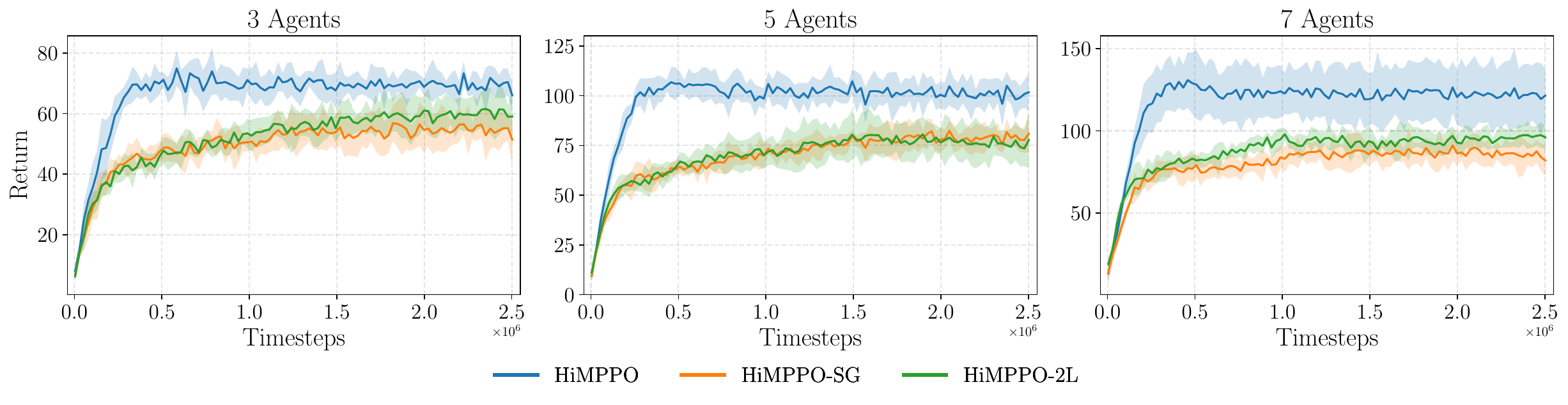}
\vspace{-0.7cm}
\caption{Results of the ablation study for different hierarchical graphs in the \textit{VMAS Sampling} environment. For each configuration, we report the average return and sample standard deviation of $6$ runs.}
\label{fig:ablation_hierarchy}
\end{figure*}

\section{Experimental Setting}~\label{app:exp_setting}
\subsection{Environments}~\label{app:environments}
\subsubsection{Level-Based Foraging with Survival~(LBFwS)}~\label{app:app_lbfws}
Starting from the codebase provided by~\citet{azmani2023cooperative}, we modify the Level-Based Foraging~\citep{albrecht2013game,papoudakis2021benchmarking} environment by further exacerbating the cooperative-competitive behavior of the agents to make it more challenging. In particular, once the food is within its range, an agent can eat it or deliver it to a landmark. Following the original version, the first strategy leads to an immediate, individual reward. Conversely, delivering food extends the episode duration: even if it gives no individual reward in the short term, it is beneficial for the team of agents, allowing them to have more time to collect food and achieve a potentially higher cumulative return. This behavior mirrors community food-sharing dynamics, where individuals contribute to the group's survival. This extended version of the environment, named \textit{Level-Based Foraging with Survival}~(LBFwS), aggravates the cooperative-competitive dynamics: in addition to evenly dividing the reward when cooperating~\citep{azmani2023cooperative}, trying to deliver food can be harmful to the agent, as other agents might act greedily to maximize their individual reward instead of adopting the group's survival strategy. Furthermore, delivering food requires a complex and temporally extended series of actions, i.e., collecting the item, transporting it to the landmark, and releasing it, making the problem challenging in terms of exploration and credit assignment.
We now illustrate the differences of the LBFwS environment with respect to the version of~\citet{azmani2023cooperative}.

\paragraph{Observation Space} Compared with the original setting, each agent has an additional grid representing its local observation of the landmark layer, i.e., if the landmark is within its sight range and if the agent (or its neighbors) is carrying food. Furthermore, each $i$-th agent observation is augmented by a $4$-dimensional vector $(t, t_{s}, x^i_{lm}, y^i_{lm})$, where $t$ is the time passed so far~(normalized with the absolute maximum number of steps $T$), $t_{s}$ is the survival time (which can be incremented by delivering food) normalized with the survival initial value $T_\sigma<T$, and $x^i_{lm}$ and $y^i_{lm}$ are the normalized relative displacements in the $(x,y)$ axis w.r.t. the landmark position. Therefore, agents are always aware of the position of the landmark, which is static and placed in the middle of the map.

\paragraph{Action Space} We add two possible actions to the original $6$-dimensional discrete action space, i.e., $\textsc{Pick}$ and $\textsc{Delivery}$. Agents that take the $\textsc{Eat}$ action while carrying food consume the item. Conversely, if an agent attempts to pick up food while already carrying an item, that action is automatically canceled. In general, unsuccessful pickup or delivery attempts do not affect the current state of the environment. 
We remark that items of higher value need the cooperation of multiple agents to be consumed.

\paragraph{Reward} We highlight that the reward structure does not change, as delivering food does not affect the agents' rewards.

\paragraph{Transition Dynamics} The survival and absolute counters start at $T_\sigma = T_\sigma$ and $t = 0$, respectively. When an agent successfully delivers food to the landmark, the counter $T_\sigma$ is increased by the value of the food, incremented by a factor $k_{surv}$. At each step, the survival~(absolute) counter decreases~(increases) by one, and the episode terminates either when $T_\sigma = 0$ or $t = T$. Each agent can carry only one item, while unsuccessful deliveries do not imply negative rewards or changes in the item that is carried. Furthermore, an agent can decide to eat the food that is being carried.

\paragraph{Experimental Setup}
In our experiments, we fix the maximum values for survival and absolute steps as $T_\sigma = 100$ and $T = 500$, respectively. Furthermore, we set the constant for the survival time increment to $k_{sur}=10$: as an example, successfully delivering an item with value $4$ leads to an increment of $T_\sigma$ of $40$ time steps. In our experiments, we consider three different maps of different sizes and amounts of resources. In particular, we consider: $1)$ \textit{LBFwS-Easy}, where the map is a $9\times9$ grid and there are $4$ food items of level $1$ and $4$ food items of level $2$; $2)$ \textit{LBFwS-Medium}, where the map is a $12\times12$ grid and there are $5$ food items of level $1$ and $5$ food items of level $2$; $3)$ \textit{LBFwS-Hard}, where the map is a $15\times15$ grid and there are $6$ food items of level $1$ and $6$ food items of level $2$. We remark that level $2$ items need the cooperation of at least $2$ agents to be picked up. For all the scenarios, the number of agents is set to $10$. Despite having more food available when the difficulty increases, the grid size makes exploration and temporal credit assignment more challenging. Furthermore, this difficulty is compounded by the sparser distribution of food items in the space, which makes successful deliveries to the landmark more complex.

\subsubsection{Hierarchical Graph Generation in Sampling}~\label{app:sampling_hier_graph}
As aforementioned, in the \textit{VMAS Sampling} scenario, we consider a $3$-level dynamic graph $\hiergraph_t$ for the \gls{himppo} model. This graph is built according to the current state $\textbf{S}_t$ of the environment. In particular, the grid is divided into $4$ quadrants, and each worker~(agent) is assigned to a single sub-manager based on its position. Therefore, the set $\gN_s$ of sub-managers is composed of $4$ node indexes, one for each quadrant. Refer to \autoref{fig:build_hier_graph} for a visual example.
We augment the initial representation $\vh_t^{s,0}$ of each sub-manager $s$ with a one-hot vector to give an explicit indexing to the sub-managers. 
However, we remark that this does not prevent the transferability of the architecture to systems with a different number of agents. 
There might be states where a sub-manager has no workers underneath, e.g., the up-right (orange) sub-manager of \autoref{fig:build_hier_graph}. In such cases, the top-level manager sends a goal to account for possible future changes in the dynamics, but the corresponding trajectory is discarded in the learning procedure.

\begin{figure}[h!]
\centering
    \includegraphics[width=0.5\textwidth]{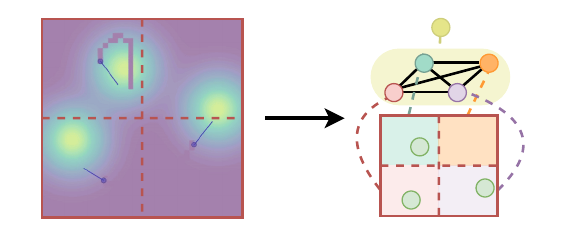}
    \caption{Building the $3$-level hierarchical graph $\hiergraph_t$ at time $t$ for the \textit{VMAS Sampling} scenario.}
    \label{fig:build_hier_graph}
\end{figure}

\subsection{Implementation Details}~\label{app:implementation}
To achieve the optimal configurations, for all the models we performed a grid search on \textit{LBFwS-Medium} environment, focusing mainly on latent dimensions, hidden units, and learning rates. For vector-based observations, encoders are designed as a linear layer. For grid-based observations~(\textit{LBFwS}), we use a Convolutional Network, where we tuned the number of output features and kernel size; then, two linear encoders process the flattened space and time observations to obtain the desired latent dimension. All the models use a discount factor $\gamma = 0.99$ and the Adam~\cite{kingma2014adam} optimizer. For PPO-based models, the entropy coefficient and clipping values are fixed to $0.01$ and $0.2$, respectively. Advantages are approximated using Generalized Advantage Estimator~\cite{schulman2015high} GAE$(\gamma,\lambda)$: $\lambda$ is set to $0.95$, except for upper levels of \gls{himppo}~(i.e., (sub-)managers) and \acrshort{hguppo}, that is fixed to $0$. For the \textit{VMAS Sampling} environment, we tuned the initial action standard deviation, which then decays by $0.05$ every $2.5\cdot10^5$ steps up to $0.1$, except for \acrshort{hguppo} where it decays every $10^6$ steps. Policies are updated for $30$ epochs with a batch size of $128$ every $40$ (\textit{LBFwS} and \textit{SMACv2}) and $32$ (\textit{VMAS Sampling}) episodes. Unless constrained by the action space, we use ReLU as the activation function. For \gls{himppo} and GPPO, the agents' message function is implemented as an MLP with $64$ hidden units in \textit{LBFwS} and \textit{SMACv2}, and as a Graph Convolutional Network~(GCN)~\cite{kipf2017semisupervised} in \textit{VMAS Sampling}. When employing a $3$-level hierarchy (\textit{VMAS Sampling}), \gls{himppo} uses an MLP with $64$ hidden units as the message function for sub-managers. DGN uses the same number of epochs and batch size, but updates the model at each episode and has a buffer of capacity $10^5$. 

\paragraph{\gls{himppo}} All the hyperparameters are shared between the different levels of the hierarchy, except for $\lambda$. We used a learning rate of $0.0001$ and $0.0005$ for the actor and critic, respectively. To keep the number of parameters bounded, all the functions are implemented as linear layers, except for the message function that has $64$ hidden units; we consider $1$ round of message passing. Embedding and goal dimensions are fixed to $64$. For (sub-)managers, encoded features of subordinate nodes are aggregated using the sum. The goal is sampled from a Gaussian policy with an initial standard deviation of $0.5$. For the \textit{VMAS Sampling} environment, where continuous actions are required, we use the same value for the initial standard deviation. For \textit{LBFwS}, we consider a $2$-dimensional convolution with $8$ output channels and kernel size $2$. To ensure stability among different hierarchy levels, we clip gradients to the range $[-1,1]$.

\paragraph{GPPO} This model does not share modules between the actor and critic. The actor and critic have a hidden dimension of $64$ and $32$, respectively. We used a learning rate of $0.0001$ for both. The embedding dimension is fixed to $64$. We perform $2$ message-passing rounds. For the \textit{VMAS Sampling} environment, the initial standard deviation is set to $0.5$. For \textit{LBFwS}, we consider a $2$-dimensional convolution with $32$ output channels and kernel size $2$. Gradients are clipped to the range $[-5,5]$.

\paragraph{H-GUPPO} In this model, the embedding is trained together with the actor, while the critic uses the last representation to get the value function. The actor and critic have a hidden dimension of $64$ and $128$, whereas the learning rates have values $0.0005$ and $0.0001$, respectively. The embedding dimension is fixed to $128$. The depth of the U-Net is fixed to $2$. For the \textit{VMAS Sampling} environment, the initial standard deviation is set to $0.5$. For the \textit{VMAS Sampling} environment, the initial standard deviation is set to $0.8$.  For \textit{LBFwS}, we consider a $2$-dimensional convolution with $32$ output channels and kernel size $2$. Gradients are clipped to the range $[-5,5]$.

\paragraph{MAPPO} In this baseline, the embedding dimension is $128$. The actor has $128$ hidden units, whereas the critic is implemented as a linear layer. We used a learning rate of $0.0005$ and $0.0001$ for the actor and critic, respectively. For the \textit{VMAS Sampling} environment, the initial standard deviation is set to $0.5$. For \textit{LBFwS}, we consider a $2$-dimensional convolution with $8$ output channels and kernel size $2$. Gradients are clipped to the range $[-5,5]$.

\paragraph{IPPO} This model uses an embedding dimension of $64$. The actor and critic have $128$ and $64$ hidden units, whereas the learning rates have values $0.0005$ and $0.00001$, respectively. For the \textit{VMAS Sampling} environment, the initial standard deviation is set to $0.8$. For \textit{LBFwS}, we consider a $2$-dimensional convolution with $8$ output channels and kernel size $2$. Gradients are clipped to the range $[-5,5]$.

\paragraph{DGN} For this baseline, we use a learning rate of $0.0001$ and a latent dimension of $128$. We use a hidden layer of $128$ units and follow the original implementation for the attention model. For \textit{LBFwS}, we consider a $2$-dimensional convolution with $16$ output channels and kernel size $2$. We perform $100$ warm-up episodes, while the target network is updated every $10$ episodes; the exploration coefficient decays from $0.9$ to $0.1$ with a decay rate of $0.0001$ per episode.

\paragraph{MAGIC} In this model, we use a learning rate of $0.001$ and a latent dimension of $256$. The model learns directed communication graphs with self-loops. The two graph attention networks~(sub-processors) have $2$ and $4$ heads, with a hidden size of $32$, and do not use normalization. We learn the first communication graph, while the second one is set as complete. For \textit{LBFwS}, we consider a $2$-dimensional convolution with $32$ output channels and kernel size $2$.

\subsection{Hardware and Software}~\label{app:hardware}
The code for the model and the baselines was developed in \textit{Python 3.10} and made use of \textit{NumPy}~\cite{harris2020array}, \textit{PyTorch}~\cite{10.5555/3454287.3455008}, and \textit{PyTorch Geometric}~\cite{fey2019fast}. We implemented the PPO-based models starting from a public repository~\cite{pytorch_minimal_ppo}~(MIT License), while we adapted the official implementations for both DGN~\footnote{https://github.com/jiechuanjiang/pytorch\_DGN (MIT License)} and MAGIC~\footnote{https://github.com/CORE-Robotics-Lab/MAGIC (MIT License)} to our setting. For the environments, we used the official repositories~\cite{bettini2022vmas,azmani2023cooperative,ellis2023smacv2} and the utility functions of \textit{TorchRL}~\cite{bou2023torchrl}~(MIT License). \textit{VMAS Sampling} uses a GPL3.0 License, while \textit{SMACv2} uses MIT License. Experiment configurations were managed using \textit{Hydra}~\cite{Yadan2019Hydra} and tracked using \textit{Neptune.ai}~\footnote{https://neptune.ai/}. Simulations were run on two workstations equipped with AMD EPYC 7513 and Intel Xeon E5-2650 CPUs. Depending on the model, training times take $2-10$ hours for \textit{VMAS Sampling}, $8-15$ hours for \textit{LBFwS}, and $1-20$ days for \textit{SMACv2}.

\end{document}